\algrenewcommand\algorithmicrequire{\textbf{input:}}
\algrenewcommand\algorithmicensure{\textbf{output:}}
\newtheorem{theorem}{Theorem}
\newtheorem*{theorem*}{Theorem}
\newtheorem{lemma}[theorem]{Lemma}
\theoremstyle{definition}
\newtheorem*{example}{Example}
\newtheorem{definition}[theorem]{Definition}
\theoremstyle{remark}
\newcommand{\z}{\zeta}
\newcommand{\C}{\mathbb{C}}
\newcommand{\R}{\mathbb{R}}
\newcommand{\F}{\mathcal{F}}
\newcommand{\iF}{\mathcal{F}^{-1}}
\newcommand{\one}{\mathbbm{1}}
\newcommand{\paren}[1]{\left\lparen{#1}\right\rparen}
\newcommand{\norm}[1]{\ensuremath{\left\|{#1}\right\|}}
\newcommand{\abs}[1]{\ensuremath{\left|{#1}\right|}}
\newcommand{\set}[1]{\ensuremath{\left\{{#1}\right\}}}
\newcommand{\Mag}[1]{\ensuremath{\operatorname{Mag}{\!\paren{#1}}}}
\newcommand{\Vol}[1]{\ensuremath{\operatorname{Vol}{\!\paren{#1}}}}
\mathchardef\mhyphen="2D 
\newcommand\restr[2]{{
  \left.\kern-\nulldelimiterspace 
  #1 
  \vphantom{\big|} 
  \right|_{#2} 
  }}
\title{Weighting vectors for machine learning: numerical harmonic analysis applied to boundary detection}
\author{%
  Eric Bunch\\
  American Family Insurance \\
  Madison, WI 53783\\
  \texttt{ebunch@amfam.com} \\
  \And
  Jeffery Kline \\
  American Family Insurance \\
  Madison, WI 53783\\
  \texttt{jklin1@amfam.com} \\
  \And
  
  Daniel Dickinson \\
  American Family Insurance \\
  Madison, WI 53783\\
  \texttt{ddickins@amfam.com}
  \And

  Suhaas Bhat \\
  American Family Insurance \\
  Madison, WI 53783\\
  \texttt{sbhat@amfam.com}
  \And
  Glenn Fung \\
  American Family Insurance \\
  Madison, WI 53783\\
  \texttt{gfung@amfam.com} \\
}
\begin{document}

\maketitle

\begin{abstract}
Metric space magnitude, an active field of research in algebraic topology, is a scalar quantity that summarizes the effective number of distinct points that live in a general metric space. The {\em weighting vector} is a closely-related concept that captures, in a nontrivial way, much of the underlying geometry of the original metric space.  Recent work has demonstrated that when the metric space is Euclidean, the weighting vector serves as an effective tool for boundary detection. We recast this result and show the weighting vector may be viewed as a solution to a kernelized SVM.  As one consequence, we apply this new insight to the task of outlier detection, and we demonstrate performance that is competitive or exceeds performance of state-of-the-art techniques on benchmark data sets.   Under mild assumptions, we show the weighting vector, which has computational cost of matrix inversion, can be efficiently approximated in linear time. We show how nearest neighbor methods can approximate solutions to the minimization problems defined by SVMs.
\end{abstract}

\section{Introduction}
\label{sec:introduction}
\begin{figure}[ht]
{
    \centering
    \begin{tabular}{cc}
        \includegraphics[width=0.29\textwidth,bb=0 0 700 700]{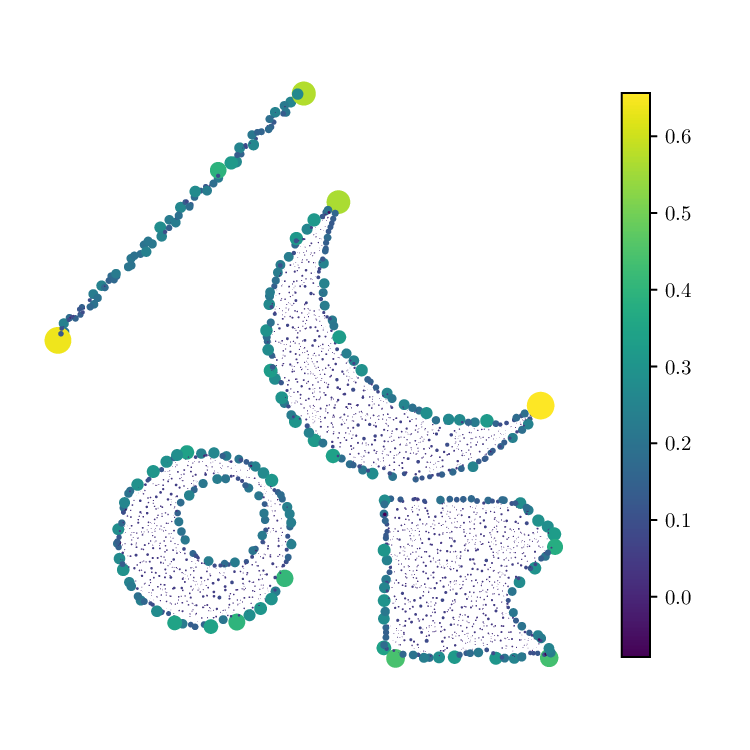}
        &
        \includegraphics[width=0.59\textwidth,bb=0 0 900 450]{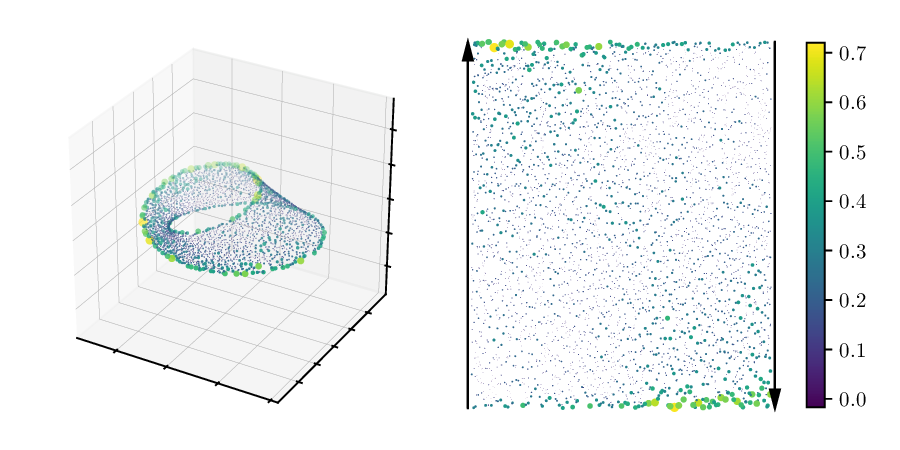}\hspace{2em} \\
    \end{tabular}
    \caption{A visualization of two weighting vectors.  The set in the left-hand figure is supported within four disjoint components, and they live in $\R^2$. The set in the right-hand figure is supported on an embedding of M\"{o}bius strip, and it lives in $\R^3$. In both images, the weight of each point is represented using color and point size.}
    \label{fig:numerical_example}
}
\end{figure}

{Magnitude} is a scalar quantity that has meaning for many different kinds of data, and as with other scalar quantities such as rank, diameter, and measure, it has wide applicability, an intuitive interpretation and a solid theoretical foundation. Magnitude has been discovered, and rediscovered multiple times in both practical and theoretical contexts. In this paper, our goal is to apply recent developments drawn from magnitude theory to machine learning, and to empirically demonstrate characteristics of magnitude that, while implicitly described by abstract theoretical results, have not, to our knowledge, been explicitly stated before, nor have they been leveraged for practical purpose.

Informally, magnitude aims to quantify the effective number of points in a space. Our aim is more subtle: we wish to identify \textit{which points} are considered ``effective'' and ``important.''  We do this using the {\em weighting vector}. The weighting vector appears naturally in the definition of magnitude, and we find that the weighting vector, under appropriate conditions, serves as an effective \textit{boundary detector}.   It is this behavior that makes the weighting vector especially well suited for machine learning tasks. 

\subsection{Background, notation and examples}
We now define magnitude and the weighting vector, we present examples, and we state several central theorems of the field. While our focus is largely on subsets of $\R^n$, we note that the concept {\em magnitude} and {\em weighting vector} can be defined for far more general types of sets.
\begin{definition}
Let $X$ be a finite metric space with metric $d$. Denote the number of points in $X$ by $\abs{X}$. The \textit{similarity matrix} of $X$ is defined to be $\z_X(i, j) := \exp(-d(x_i, x_j))$ for $1 \leq i,  j \leq \abs{X}$. Whenever the inverse of $\z_X$ exists, we define the \textit{weighting vector} of $X$ to be 
    \begin{align*}
        w_X \coloneqq \z_X^{-1}\one,
    \end{align*}
where $\one$ is the $\abs{X} \times 1$ column vector of all ones. The \textit{magnitude} of $X$ is defined to be the quantity
    \begin{align*}
        \Mag{X} \coloneqq \one^Tw_X = \one^T \z_X^{-1} \one.
    \end{align*}
That is, $\Mag{X}$ is the sum of all the entries of the weighting vector $w_X$.
\end{definition}

\begin{example}
When $X$ is a finite subset of Euclidean space, $\z_X$ is a symmetric positive definite matrix [Theorem 2.5.3, \cite{Leinster2013TheMO}]. In particular, $\z_X^{-1}$ is guaranteed to exist. Hence, the weighting vector and magnitude exist for finite subsets of $\R^n$.
\end{example}

\begin{example}
Given an undirected, unweighted graph $G$, one can define a metric space whose points are given by the vertices of $G$, and whose metric is taken to be the length of the shortest path between two vertices. The weighting vector of this metric space is not guaranteed to exist.
\end{example}

\begin{definition}\label{def:infinitesets}
For an arbitrary subset $X \subseteq \R^n$, the magnitude of $X$ is defined as
\begin{align*}
    \Mag{X} = \sup\{ \Mag{Y} \mid Y \text{ is a finite subset of } X \}.
\end{align*}
\end{definition}
\begin{example}
In 1 dimension, and for $t>0$, one has that $\Mag{[0,t]}=1+t/2$. This was shown by Leinster in  \cite{Leinster2013TheMO}. The magnitude of the ball with radius $r$ in $\R^{2n+1}$ is a rational function of $r$; this was recently demonstrated by Barcel\'{o} and Carbery~\cite{Barcelo18CptEuclid}.
\end{example}

For a finite metric space $(X, d)$, and any $t \in [0, \infty]$, we can define a new metric space $(tX, td)$ in the following way. The points of $tX$ are the same as those of $X$, and the metric $td$ is $d$ scaled by $t$: $td(x, y) \coloneqq t\cdot d(x, y)$. The \textit{magnitude function of $X$} is the map $t \mapsto \Mag{tX}$, and it is well-defined whenever $\z_{tX}$ is invertible. Although the inverse of $\z_{tX}$ may not be defined in general, it has been shown in [Proposition 2.2.6 \cite{Leinster2013TheMO}] that for finite subsets of $\R^n$, the magnitude function is analytic on $(0, \infty)$. We also have the following:
\begin{theorem}[Proposition 2.2.6 \cite{Leinster2013TheMO}]\label{theorem_t_inf_finite_set}
For $X \subset \R^n$ finite, $\lim_{t \rightarrow \infty} \Mag{tX} = \abs{X}$.
\end{theorem}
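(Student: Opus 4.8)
The plan is to show directly that the similarity matrix $\z_{tX}$ converges to the identity matrix as $t \to \infty$, and then to exploit the continuity of matrix inversion. First I would observe that the $(i,j)$ entry of $\z_{tX}$ is $\exp(-t\, d(x_i, x_j))$. Since $X$ is a finite metric space, its points are pairwise distinct, so $d(x_i, x_j) > 0$ whenever $i \neq j$, while $d(x_i, x_i) = 0$. Consequently, as $t \to \infty$ every off-diagonal entry tends to $0$ and every diagonal entry is identically $1$. In other words $\z_{tX} \to I$ entrywise, and since we work in the finite-dimensional space of $\abs{X} \times \abs{X}$ matrices, this convergence holds in any matrix norm.

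Next I would pass from convergence of the matrices to convergence of their inverses. The key fact is that the map $A \mapsto A^{-1}$ is continuous at every invertible matrix; this follows from Cramer's rule, which expresses each entry of $A^{-1}$ as a ratio of a polynomial in the entries of $A$ to $\det A$, and such a rational function is continuous wherever $\det A \neq 0$. Since $\det I = 1 \neq 0$ and $\det$ is continuous, for all sufficiently large $t$ the determinant $\det \z_{tX}$ is bounded away from $0$, so $\z_{tX}^{-1}$ is well-defined and $\z_{tX}^{-1} \to I^{-1} = I$.

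Finally I would combine these observations with the formula $\Mag{tX} = \one^T \z_{tX}^{-1} \one$. Because $A \mapsto \one^T A \one$ is a fixed linear functional, hence continuous, the convergence $\z_{tX}^{-1} \to I$ yields
\[
    \lim_{t \to \infty} \Mag{tX} = \one^T I \one = \one^T \one = \abs{X},
\]
as claimed.

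I expect the only genuine subtlety to lie in the second step: one must verify that the limit of the inverses equals the inverse of the limit, rather than merely assuming it. This is legitimate here precisely because the limiting matrix $I$ is itself invertible, so the sequence $\z_{tX}$ eventually lives inside the open set of invertible matrices on which inversion is continuous. I would further note that the invertibility of each individual $\z_{tX}$ for $t > 0$ is already guaranteed by the positive-definiteness of similarity matrices of finite Euclidean sets, so no division by a vanishing determinant can occur along the way.
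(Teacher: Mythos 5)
Your proof is correct. The paper does not reproduce a proof of this statement---it imports it by citation from Leinster---and your argument (entrywise convergence $\z_{tX} \to I$, continuity of matrix inversion at the invertible limit, then applying the continuous linear functional $A \mapsto \one^T A \one$) is essentially the standard one given in that reference, including the correct handling of the one genuine subtlety, namely justifying that the limit of the inverses is the inverse of the limit.
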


The above proposition is one of the reasons underlying the informal interpretation of magnitude as quantifying the effective number of points in a space. The following very recent theorem gives a connection between the magnitude of $X \subset \R^n$ and the $n$-volume of $X$.
\begin{theorem}[Theorem 1 \cite{Barcelo18CptEuclid}]\label{theorem_vol}
For $X \subset \R^n$ nonempty and compact, we have
\begin{align*}
    \lim_{t \rightarrow 0^+} \Mag{tX} 
    &= 
    1, \text{\hspace{3pt} and\hspace{3pt}}
    \lim_{t \rightarrow \infty}\frac{\Mag{tX}}{t^n} 
    = 
    \frac{\Vol{X}}{n!\Vol{B_n}},
\end{align*}
\noindent where $B_n \subset \R^n$ is the unit ball.
\end{theorem}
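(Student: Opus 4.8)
The plan is to replace the linear-algebraic definition of magnitude with a variational (energy) formulation and then treat the two limits by potential theory. For finite $X$, positive-definiteness of $\z_X$ gives, by Cauchy--Schwarz in the inner product $\langle u,v\rangle = u^T\z_X v$, the identity
\[
\Mag{X} = \sup_{v \ne 0} \frac{(\one^T v)^2}{v^T \z_X v},
\]
attained at the weighting vector $w_X = \z_X^{-1}\one$. Combining this with Definition~\ref{def:infinitesets}, for compact $X \subseteq \R^n$ one expects the representation
\[
\Mag{tX} = \sup_{\mu}\frac{\mu(X)^2}{E_t(\mu)}, \qquad E_t(\mu) \coloneqq \iint e^{-t\abs{x-y}}\,d\mu(x)\,d\mu(y),
\]
the supremum ranging over finite signed measures $\mu$ on $X$, with an extremal \emph{weighting measure} $\mu_t$ characterized by the potential condition $\int_X e^{-t\abs{x-y}}\,d\mu_t(y) = 1$ for $x \in X$ and satisfying $\Mag{tX} = \mu_t(X)$. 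Making this precise is the first task, and the natural setting is the reproducing-kernel Sobolev space $H^{(n+1)/2}(\R^n)$: the Fourier transform of $e^{-\abs{x}}$ on $\R^n$ is a constant multiple of $(1+\abs{\xi}^2)^{-(n+1)/2}$, so $E_1$ is, up to a constant, the squared $H^{-(n+1)/2}$ norm and $e^{-\abs{x-y}}$ is the Green's function of $(1-\Delta)^{(n+1)/2}$. Existence, uniqueness and the Euler--Lagrange (potential) equation for $\mu_t$ then follow from Riesz representation in this Hilbert space.

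The limit $t \to 0^+$ is the milder of the two. A lower bound $\Mag{tX} \ge 1$ holds for every $t$: in the finite approximation, taking $v = e_i$ in the variational formula gives ratio $(\one^T e_i)^2/(\z_{tX})_{ii} = 1$. For the matching upper bound I would perturb about the degenerate kernel. Since $X$ is compact, $1 \ge e^{-t\abs{x-y}} \ge e^{-t\operatorname{diam}(X)} \to 1$ uniformly on $X \times X$, so $E_t(\mu) = \mu(X)^2 + \OO(t)$ along the relevant extremizers; using the conditional negative-definiteness of the Euclidean distance to sign the first-order correction (and thereby prevent the near-degeneracy of the kernel from inflating the total variation of $\mu_t$), the variational formula is squeezed to yield $\limsup_{t\to0^+}\Mag{tX} \le 1$. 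Together these give $\lim_{t\to0^+}\Mag{tX}=1$.

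The volume asymptotic as $t\to\infty$ is the heart of the theorem. The guiding heuristic comes from the potential equation: deep in the interior of $X$, at scale $\gg 1/t$, the set looks locally like all of $\R^n$, so $\int_X e^{-t\abs{x-y}}\,dy \approx \int_{\R^n} e^{-t\abs{z}}\,dz = t^{-n}\int_{\R^n} e^{-\abs{u}}\,du$. Evaluating in polar coordinates gives
\[
\int_{\R^n} e^{-\abs{u}}\,du = n\Vol{B_n}\int_0^\infty e^{-r}r^{n-1}\,dr = n!\,\Vol{B_n},
\]
which is exactly the constant in the statement. Hence the weighting density must be approximately the constant $t^n/(n!\,\Vol{B_n})$ on the bulk of $X$, and integrating predicts the leading term $\Mag{tX} \approx t^n\,\Vol{X}/(n!\,\Vol{B_n})$. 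I expect the main obstacle to be making this boundary-layer picture rigorous: one must show that the region within $\OO(1/t)$ of $\partial X$, where the weighting departs from the constant interior value, contributes only $o(t^n)$, and that the constraint of working on $X$ rather than $\R^n$ does not distort the interior density to leading order. This is a semiclassical problem for $(1-\Delta/t^2)^{(n+1)/2}$, which I would attack by comparing $\mu_t$ against the explicit full-space weighting (where the potential equation is solved exactly by Fourier transform), estimating the difference in $H^{(n+1)/2}$ energy, and bounding the boundary contribution by the surface content of $\partial X$ times a lower power of $t$; the general compact case is then recovered by approximating $X$ by sets with regular boundary.
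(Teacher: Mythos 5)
The paper offers no proof of this statement---it is quoted as Theorem~1 of Barcel\'o--Carbery \cite{Barcelo18CptEuclid}---so your proposal can only be judged against that source. Your framework is the right one: Meckes's variational characterization $\Mag{X}=\sup_\mu \mu(X)^2/E(\mu)$, the identification of $E$ with a squared $H^{-(n+1)/2}$ norm via $\F\paren{e^{-\abs{\cdot}}}\propto(1+\abs{\xi}^2)^{-(n+1)/2}$, and the constant $\int_{\R^n}e^{-\abs{u}}\,du=n!\Vol{B_n}$ are all correct, and both of your \emph{lower} bounds are essentially complete. (For $t\to\infty$ you do not even need the boundary-layer picture: taking $\mu=\lambda_X$, Lebesgue measure restricted to $X$, gives $\Mag{tX}\ge \Vol{X}^2\big/\iint_{X\times X}e^{-t\abs{x-y}}\,dx\,dy\ge t^n\Vol{X}/(n!\Vol{B_n})$ in one line, for every $t$ and with no regularity hypotheses.)

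The gaps are in the two upper bounds, and both come from staying on the supremum-over-measures side, where an upper bound requires controlling \emph{every} competitor. For $t\to 0^+$, your squeeze needs $E_t(\mu)\ge(1-Ct)\,\mu(X)^2$ uniformly in signed $\mu$, but the error $\iint(1-e^{-t\abs{x-y}})\,d\mu\,d\mu$ is only controlled by $t\operatorname{diam}(X)\norm{\mu}_{TV}^2$, and the total variation of near-extremal measures is not bounded in terms of $\mu(X)$ (in dimension $n\ge 3$ the extremizer is a genuine distribution involving normal derivatives of surface measure, so approximating sequences of measures have unbounded total variation); conditional negative-definiteness of $\abs{x-y}$ signs the first-order term only on the hyperplane $\mu(X)=0$ and does not tame the remainder. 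The route that closes is dual: magnitude is, up to the constant $n!\Vol{B_n}$, a Bessel capacity, i.e.\ an infimum of $\norm{h}^2_{H^{(n+1)/2}}$ over $h\equiv 1$ near the set, so the upper bound needs only one good test function---or, more softly, monotonicity $\Mag{tX}\le\Mag{tB}$ for a ball $B\supseteq X$ together with outer continuity of the capacity as $tB$ shrinks to a point. For $t\to\infty$, your bound on the boundary layer by the ``surface content of $\partial X$'' presumes regularity that a general nonempty compact set lacks (it may have positive volume and empty interior), and your proposed approximation by regular sets fails from the inside for exactly such sets. It does work from the outside: by monotonicity and outer regularity of Lebesgue measure it suffices to prove $\limsup_t \Mag{tX}/t^n\le\Vol{X}/(n!\Vol{B_n})$ for finite unions of cubes, where your comparison with the explicit full-space weighting is legitimate; you should state explicitly that only the upper bound requires this reduction, since the lower bound already holds for arbitrary compact sets.
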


\subsection{Properties of the weighting vector}
The weighting vector plays a central role in the applications that are discussed below, but it is not clear by inspection of its definition that the individual entries of the weighting vector carry useful information.   To provide intuition about the weighting vector, we now highlight some key features.


Let $X\subset \R^n$ and $w_X$ its weighting vector. The entries of $w_X$ may be indexed in a canonical way by $x\in X$. We call $w_X(x)$, the weight of $x$. 
Since the similarity matrix $\zeta_{X}$ is positive definite, its inverse exists and is also positive definite. Thus, $\one'\zeta_{X}^{-1}\one = \one'w_{X}>0$. Although the average value of the entries of $w_X$ is guaranteed to be positive, it may happen that $w_X(x)<0$ holds for some $x\in X$.  

An heuristic argument led us to conjecture that the weighting vector ought to be useful as a device for boundary detection. Very recently, this heuristic was made rigorous with a formal theorem and proof \cite{meckesprivatecomm}. Further discussion about this development is in Section \ref{sec:boundarydetect}. 

It is not at all clear by inspection of the definition that a weighting vector might carry useful information about a set's boundary. There is early work empirically investigating the weighting vector for examples of finite metric spaces \cite{willerton2009heuristic}. The theoretical foundation of using weighting vectors for boundary detection   leverages the theory of Bessel potential functions and other machinery of harmonic analysis. 


\subsection{Related work, paper structure}

An early reference to the concept of magnitude occurs in~\cite{Solow1994}, where it was introduced as a way to measure biological diversity. However, the mathematical motivations were not divulged in this paper. Two decades later, Leinster placed the magnitude of a metric space within a formal mathematical framework using category theory~\cite{Leinster2013TheMO}.  This highly abstract perspective lead to the current era, where it is being explored through many different lenses, including functional analysis \cite{MeckesPosDef, Barcelo18CptEuclid}, harmonic analysis~\cite{meckes2015magnitude} and homology theory \cite{leinster2017magnitude}, where it has been shown to be equivalent to an Euler characteristic.   Much of the prior emphasis has been on a set's  magnitude, and this focus has overshadowed the potential utility of the weighting vector.

Recently, {topological data analysis} has emerged as an approach to the problem of describing the shape of high-dimensional data~\cite{Edelsbrunner2000TopologicalPA, Scopigno04persistencebarcodes, computingPH}. One particularly popular topic within this field is persistent homology \cite{Edelsbrunner2000TopologicalPA}.  Recent efforts have realized magnitude as the Euler characteristic of a homology theory, called magnitude homology \cite{leinster2017magnitude}. It has also been shown that there is a direct relationship between magnitude homology and persistent homology \cite{ otter2018magnitude}; however, the current paper is the first known application of magnitude directly to machine learning.


We now describe the remaining sections of this paper. Section~\ref{sec:boundarydetect} presents theoretical and heuristic justification as to why the weighting vector is a boundary detector. Section \ref{sec:weight_and_svms} details how the weighting vector can be seen as the solution to a generalized SVM. Section \ref{sec:nn_approx} presents methods to approximate the weighting vector with nearest neighbor methods. 
Section \ref{sec:outlier_detection} presents results of experiments run with a weight-based anomaly detection algorithm. We end with concluding remarks in Section \ref{sec:conclusions}.

\section{Boundary detection}
\label{sec:boundarydetect}
The purpose of this section is to state the theorem in~\cite{meckesprivatecomm}, which contributes rigor to the above discussion about boundary detection. We begin by offering informal comments about how the finite, discrete sets of the applications  relate to the infinite, continuous objects of the theorem. The background required for this initial part of the discussion is limited to basic familiarity with the Fourier transform. These comments also serve to present the moral case that weighting vectors ought to be useful as boundary detectors.  The background required to interpret the theorem's statement includes substantial familiarity with tempered distributions and related theory, which is beyond the scope of what can be presented here.


In the sequel, let $\F(f)$ denote the Fourier transform of a smooth function $f:\R^{n}\rightarrow \C$. Recall that under suitable conditions on $f$ one has that
\begin{align}
    (2\pi i\xi)^{k} \paren{\F{f}}(\xi) = \paren{\F\paren{\partial^{k}f}}(\xi).
    \label{eqn:Fdf}
\end{align}

Now suppose $X\subset[0,t]$ is a finite set of equispaced points selected from the interval $[0,t]$ (the equispacing condition may be relaxed to instead be a uniform random sample).  
The linear equation that defines the weighting vector is $\z_Xw = \one$.
This statement has, via Riemann summation, a continuous analogue expression for a ``weighting function'' $v$ for the entire interval, $[0,t]$. Let $h:=h_{1/2\pi}$. This analogue has the form,
\begin{align*}
      h*v(x) = \int_{\R} h(x-y)v(y)\,dy &= \int_{\R} e^{-\abs{x-y}}v(y)\,dy = I_{[0,t]}(x).
\end{align*}
By \cite{folland1999real} Chapter 8,
\begin{equation*}
    \paren{\F{I_{[0,t]}}}(\xi) 
    =
    \paren{\F{ (h*v)}}(\xi)
    =
    \paren{\F{h}}(\xi)\paren{\F{v}}(\xi)
    =
    \frac{2}{{1+4\pi^2\xi^2}} \paren{\F{v}}(\xi),
\end{equation*}
or 
\begin{align}
   \frac{1}{2}({1+4\pi^2\xi^2}) \F\paren{I_{[0,t]}}(\xi) = \paren{\F{v}}(\xi).
   \label{eqn:wdef}
\end{align}
Applying Eq.~\ref{eqn:Fdf} and the operator $\iF$ to Eq.~\ref{eqn:wdef}, one has
\begin{align}
    \frac{1}{2}\paren{ I_{[0,t]} - \partial^2 I_{[0,t]} }=v.
    \label{eqn:v}
\end{align}
In Eq.~\ref{eqn:v}, the term $\partial^2 I_{[0,t]}$ vanishes everywhere except at the points $0$ and $t$, where it behaves as second-order derivative operator.  Informally, $v$ is constant on the open interior $(0,t)$, and it approximates a discrete second-order derivative operator at the boundary points $0$ and $t$.  This informal argument may be adapted to $n>1$ dimensions, where $I_{[0,t]}$ is replaced by more general $A\subset\R^{n}$. 

We now turn to the theorem statement. The {\em Bessel potential space} is the Hilbert space of tempered distributions
\begin{align*}
    H^{s}\coloneqq \left\{ \phi\in S'(\R^{n}) : (1+\norm{\cdot}^2)^{s/2}\F\phi\in L^2(\R^n)\right\}
\end{align*}
that is equipped with norm
\begin{align*}
    \norm{\phi}_{H^{s}}\coloneqq 
    \paren{
    \int_{\R^{n}} \paren{ 1 + \norm{\xi}^{2}}^{s} 
    \abs{\paren{\F\phi}(\xi)}^2 \,d\xi
    }^{1/2}.
\end{align*}
For for compact $A\subset\R^{n}$, the definition of a {\em weighting}, as well as necessary and sufficient conditions for $A$ to have a weighting, can be found in Definition 3.3 and Theorem 4.1 of~\cite{meckes2015magnitude}.
\begin{theorem}(\cite{meckesprivatecomm})
Let $n$ be odd, $A\subset\R^{n}$ compact with weighting $z\in H^{-(n+1)/2}(\R^{n})$, and let $\lambda_A$ denote Lebesgue measure restricted to $\operatorname{int} A$.  Then 
\begin{align}z=\frac{1}{n!\omega_n}\lambda_A+\nu\label{eqn:z}\end{align} for some $\nu\in H^{-(n+1)/2}(\R^{n})$ that is supported on $\partial A$. The constant $\omega_n\coloneqq \pi^{n/2}/\Gamma(n/2+1)$ is the volume of the unit $n$-ball.
\end{theorem}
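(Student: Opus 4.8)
The plan is to solve the weighting equation in the Fourier domain and to exploit the fact that, for odd $n$, the inverse of the relevant Fourier multiplier is a genuine (local) differential operator. First I would record the defining equation of the weighting from~\cite{meckes2015magnitude}: with similarity kernel $g(x):=e^{-\norm{x}}$, the weighting $z$ is the tempered distribution, supported on $A$, whose potential $u:=g*z$ satisfies $u\equiv 1$ on $\operatorname{int}A$. The hypothesis $z\in H^{-(n+1)/2}$ is precisely the borderline regularity that guarantees $u$ is a well-defined function of class $H^{(n+1)/2}$, so that the operations below make sense as tempered distributions.

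Second, I would compute the Fourier transform of the kernel. A standard Poisson-kernel computation gives
\begin{align*}
\paren{\F g}(\xi)=\frac{c_n}{\paren{1+4\pi^2\norm{\xi}^2}^{(n+1)/2}},\qquad c_n=2^n\pi^{(n-1)/2}\Gamma\paren{\tfrac{n+1}{2}}.
\end{align*}
Because $n$ is odd, $m:=(n+1)/2$ is a positive integer, so $1/\paren{\F g}(\xi)=c_n^{-1}(1+4\pi^2\norm{\xi}^2)^m$ is a polynomial in $\norm{\xi}^2$. Using $\F(-\Delta f)=4\pi^2\norm{\cdot}^2\F f$, this multiplier corresponds to the constant-coefficient differential operator
\begin{align*}
L:=\frac{1}{c_n}\paren{I-\Delta}^m,
\end{align*}
a two-sided inverse of convolution by $g$ that maps $H^{(n+1)/2}$ into $H^{-(n+1)/2}$. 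Since $\F g$ never vanishes, applying $L$ to $u=g*z$ yields the global identity $z=Lu$; this step uses only the multiplier calculus and the Bessel-space regularity, not the weighting condition.

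Third comes the key local step. As $n$ is odd, $L$ is a local operator, so its action on an open set depends only on $u$ there. On $\operatorname{int}A$ we have $u\equiv 1$, hence $\Delta u=0$ and $(I-\Delta)^m u=1$, giving $z=Lu=c_n^{-1}$ throughout $\operatorname{int}A$. I would then verify the arithmetic identity $c_n=n!\,\omega_n$ for odd $n$ (a consequence of the Legendre duplication formula for $\Gamma$; checking $n=1$ gives $c_1=2=1!\,\omega_1$). Thus $z$ restricted to $\operatorname{int}A$ equals $\tfrac{1}{n!\omega_n}\lambda_A$. Setting $\nu:=z-\tfrac{1}{n!\omega_n}\lambda_A$, the distribution $\nu$ vanishes on $\operatorname{int}A$ by construction and vanishes off $A$ because both terms are supported on $A$; since $\partial A=A\setminus\operatorname{int}A$, we conclude $\nu$ is supported on $\partial A$, and $\nu\in H^{-(n+1)/2}$ because a compactly supported measure such as $\lambda_A$ lies in $H^{-s}$ for every $s>n/2$.

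The main obstacle is the harmonic-analytic bookkeeping rather than any single calculation. One must justify, in the spaces $H^{\pm(n+1)/2}$, that $u=g*z$ is a well-defined $H^{(n+1)/2}$ function, that the multiplier inversion $z=Lu$ holds as an identity of tempered distributions, and that reading off the value of the local operator $Lu$ on the open set $\operatorname{int}A$ directly from $u\equiv 1$ is valid distributionally. It is exactly the parity of $n$ that renders $L$ a differential, hence local, operator; for even $n$ the exponent $m$ is a half-integer, $L$ becomes a nonlocal pseudodifferential operator, and the clean interior identity $z=c_n^{-1}$ would fail because boundary contributions could propagate inward.
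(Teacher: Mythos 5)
The paper does not actually prove this theorem---it is quoted from a private communication of Meckes, and the text only sketches the one-dimensional Fourier-multiplier heuristic ($\F h = 2/(1+4\pi^2\xi^2)$, hence $v=\tfrac12(I-\partial^2)I_{[0,t]}$) that your argument generalizes. Your proposal is the direct $n$-dimensional formalization of that same idea---inverting convolution by $e^{-\norm{\cdot}}$ via the local operator $c_n^{-1}(I-\Delta)^{(n+1)/2}$ for odd $n$, with the constant identity $c_n=n!\,\omega_n$ checking out via the duplication formula---and it is sound, matching the route the paper attributes to Barcel\'o--Carbery (Theorem 5) under extra regularity.
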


The decomposition of $z$ stated in Eq.~\ref{eqn:z} agrees with informally-derived expression for $v$ stated in Eq.~\ref{eqn:v}. Numerically, we find that $n$ odd does not seem to be required.  Finally, we note that under extra regularity assumptions, a similar result follows from Theorem 5 of~\cite{Barcelo18CptEuclid}.

\subsection{Interpreting Transformer Language Models with Weighting Vectors}

\begin{figure}[ht]
\centering
\includegraphics[width=\textwidth,bb=0 0 1000 200]{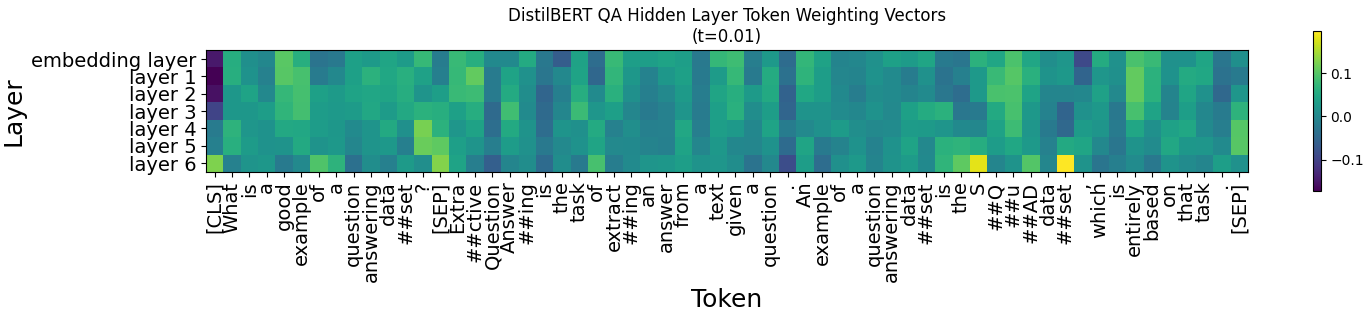}
\caption{The layers of a DistilBERT QA model manipulate the token-vector space so that the start and end tokens of the answer span are on a boundary.}
\label{fig:interpret_bert_QA}
\end{figure}

In the sequel, we use the theory from section \ref{sec:boundarydetect} to establish relationships between magnitude and well-known tools in the machine learning community and develop novel uses for the boundary-detection properties of the weighting vector. Before doing so, however, we begin with a straightforward application of the weighting vector: interpreting the internal workings of modern transformer-based language models such as BERT \cite{devlin-etal-2019-bert}. Using a DistilBERT \cite{sanh2020distilbert} pre-trained extractive question-answering model \cite{wolf-etal-2020-transformers}, we compute the weighting vector of the tokens in the 768-dimensional embedding space for each layer.

As a concrete example, shown in figure \ref{fig:interpret_bert_QA}, the question and context fed to the model are "What is a good example of a question answering dataset?", and "Extractive question answering is the task of extracting an answer from a text given a question. An example of a question answering dataset is the SQuAD dataset, which is based on that task.", respectively. The desired answer, of course is "SQuAD dataset". In the final embedding layer of the transformer, the tokens that correspond to the beginning and end of "SQuAD dataset" have the largest weighting vector components. Notably, these are the tokens that the model has been trained to return. This implies that under the hood of the transformer, the vector embeddings of the correct tokens are being pushed towards a boundary of a region within the embedding space, and the remaining tokens remain in the interior of that boundary. 

Our analysis shows that this behavior holds in general. Using 1000 random question-context pairs from the SQuAD v2 dataset \cite{2016arXiv160605250R} and a value of $t=0.01$, the start and end tokens in the last embedding layer on average each have weights that are in the $96^{th}$ percentile of token weights for the pairing. 

We also note that as we trace the progress of the [CLS] token through the different layers, it's weighting vector component goes from very negative to quite positive, and this transition only begins around layer 3. This hints that in the DistilBERT model, many layers are necessary in order to imbue the embeddings with a solid semantic understanding of the input tokens.

\section{Relation to generalized support vector machines} \label{sec:weight_and_svms}

\begin{figure}[htbp]
    \centering
    \includegraphics[width=0.9\textwidth,bb=0 0 1800 900]{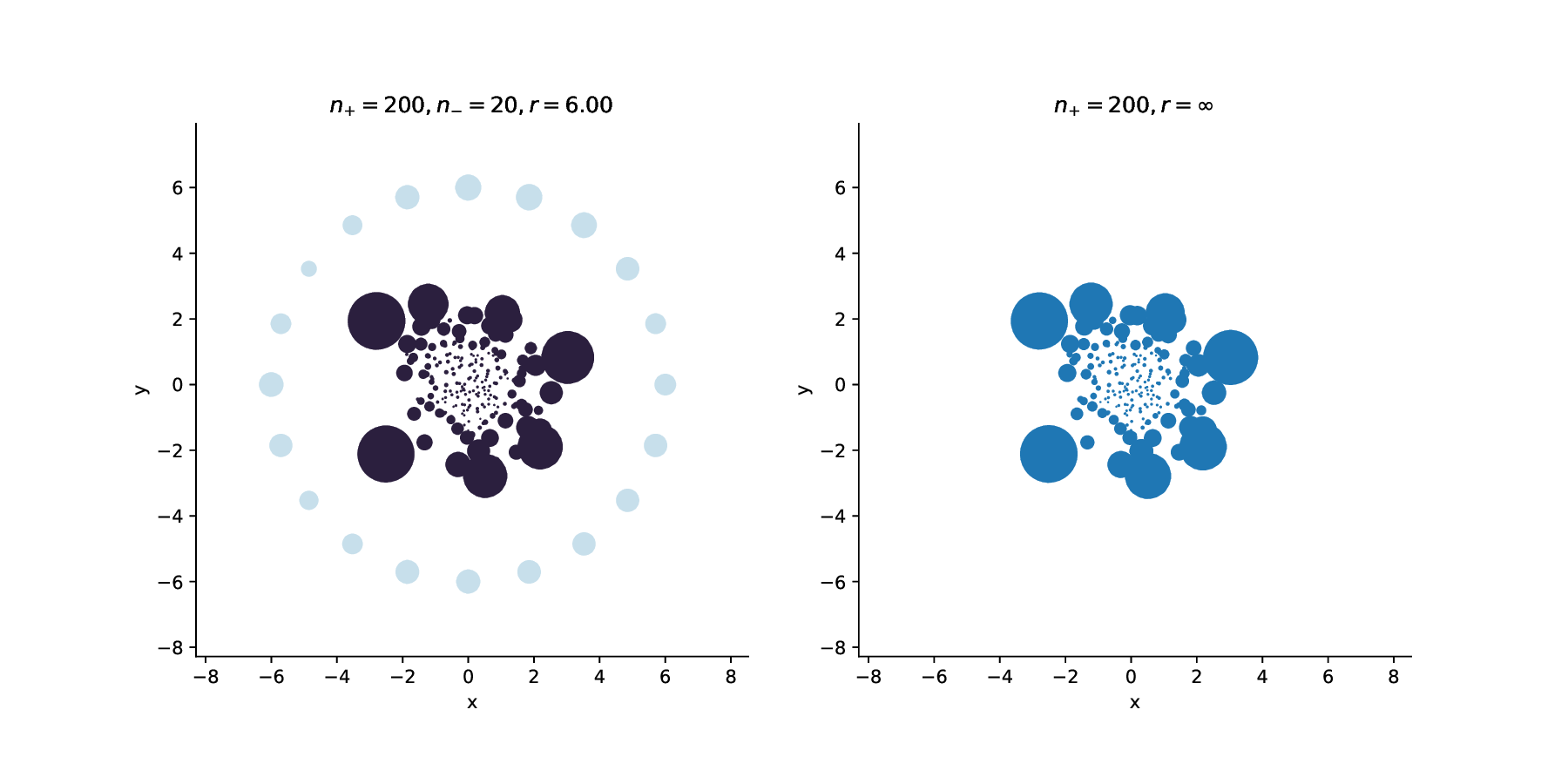}
    \caption{Classification of  discrete sets in the plane using a kernelized SVM with the Laplacian kernel,  $K(x',y)\coloneqq \exp\paren{-t\norm{x-y})}$.  Point size is based on the modulus of the SVM solution. One of the classes is equi-distributed on a circle of radius $r=6$ (left) and $r=\infty$ (right). The solution of the SVM used in the image at right is equivalent the solution of a one-class SVM, and since the kernel is the Laplacian, this solution also equals the weighting vector of the cluster.}
    \label{fig:2class}
\end{figure}

In this section, we recast the weighting vector as a solution to a one-class support vector machine. We begin with some background on support vector machines (SVM) \cite{svm, feat_selection_svm, learning_from_data, learning_theory_vapnik, generalizedSVM}.  The task of a linear support vector machine aims to classify $m$ points in $\R^n$ as belonging to one of two classes, either  $A^+$ or $A^-$. If we represent the data by the $m\times n$ matrix $A$, and the class membership as specified by a diagonal matrix $D$, with $+1$ or $-1$ along the diagonal, the linear support vector machine has the form
\begin{align*}
    \min_{w,\gamma,y} &\quad\nu \one'y + \norm{w}_1\\
    \textrm{subject to}&\quad D(Aw-\one \gamma) + y\geq \one\\
    &\quad y\geq 0,
\end{align*}
where $\norm{w}_1\coloneqq \sum_{i}\abs{w_i}$ denotes the 1-norm of $w$. The variable $y$ is a vector of slack variables, and the optimal $(w,\gamma)$ characterize a separating boundary between $A^+$ and $A^-$.  The decision boundary consists of $x\in\R^n$ that satisfies the linear equality $w'x=\gamma$. 

In~\cite{generalizedSVM}, it is shown that a very general {\em kernelized} support vector machine may be formulated.  The formulation established there, and which we use here, is more general than the common approach that employs the kernel-trick to specify a SVM's nonlinear decision boundary.  A {\em kernel function}, $K(A,B)$, is any map from $R^{m\times n}\times R^{n\times l}\rightarrow\R^{m\times l}$. If $x,y \in\R^{n}$ are column vectors, then $K(x',A')$ is a row vector in $R^{m}$, and $K(x',y)$ is a scalar.  In the established notation, the kernelized support vector machine program is
\begin{align*}
    \min_{u,\gamma,y} &\quad \one' y + s \norm{u}_1\\
    \textrm{subject to}&\quad D(K(A,A')Du-\one\gamma) + y\geq \one\\
    &\quad y\geq 0.
\end{align*}
If $(u,\gamma)$ is a solution to this program, then the  decision boundary consists of $x\in\R^{n}$ that satisfy $K(x',A')Du=\gamma$.

If this program has kernel function $K(x',y)\coloneqq \exp(-t\norm{x-y})$, regularization parameter $s=0$, and a classification task with only one class, then $K(A,A')=\zeta_A$ and the program reduces to
\begin{align}
\min_{u,\gamma} \norm{ \min ({\zeta_A u-(1+\gamma)\one},  0)}_1\label{eqn:ubzeta}
\end{align}
For any pair $(u,\gamma)$, one has that
\begin{align*}
     \norm{ \min ({\zeta_A u-(1+\gamma)\one, 0}}
\leq \norm{\zeta_A u-(1+\gamma)\one}_1.
\end{align*}
When $A$ is a finite subset of $\R^n$, the matrix inverse of $\zeta_A$ exists, and so the term on the right, which is an upper bound, vanishes when $u=(1+\gamma)\zeta_A^{-1}\one$, which is (up to normalization) equal to the weighting vector.  Since this is an upper bound, the term on the left, which is nonnegative, must also vanish, and this shows the weighting vector is an optimal solution to the generalized SVM program.

This allows us to reinterpret the weighting vector as a support vector of a generalized SVM, and it also aligns with the empirical and theoretical observations made elsewhere that the entries of the weighting vector can be used as an effective boundary detector.

This view also provides an explanation for the empirical observation that when $t>0$ is very small, the weighting vector assigns larger weights to the extremal points of the convex hull of a metric space.  Consider two labeled sets in an arrangement like that illustrated on the left in Figure~\ref{fig:2class}.  In the left image, one cluster of points is surrounded by a set of points distributed on a circle. This pair of sets was used to train a two-class SVM with kernel function $K(x',y)\coloneqq \exp\paren{-t\norm{x-y})}$. Letting $(w,\gamma)$ represent the solution to this SVM, the image uses $\abs{w}$ to represent  the point sizes in this image.  In the image on the left, the largest entries of the solution act as supporting vectors of the maximum margin decision boundary, and those points are nearest to the opposite class.  On the right, point size is determined by the weighting vector applied to the just the clustered class, which we have just seen, is a one-class SVM. The one-class SVM can formally be interpreted as a 2-class SVM, except that the points belonging to a second class surround the first image and live on a circle with infinite radius.

\section{Approximation by nearest neighbors}\label{sec:nn_approx}


In this section show that by using the work of Leinster on magnitude \cite{Leinster2013TheMO} we can give new insight to the solution of generalized SVMs induced by a kernel $K$. For certain kernels $K$, $w$ can be approximated by $1/f$, where $f$ is the kernel density estimator: $f(x) = \frac{1}{n}\sum_{i=1}^nK(x, x_i)$.

Boundary detection has shown up in the context of approximating viability kernels in viability theory \cite{viability_theory}. In \cite{viab_kernels_svms} it was shown that the solutions to kernelized SVMs, and specifically an SVM with a Gaussian kernel, perform well when approximating the boundary of a viability kernel. There is also work done in \cite{kdtree_vol} to use the k-d tree structure to find the boundary of a subset of $\R^d$ via oracle-labeled sample points, as well as to compute the volume of the enclosed region. After an exchange of the Gaussian kernel of \cite{viab_kernels_svms} for a Laplacian kernel, the result of this section will allow us to see the two approaches of \cite{viab_kernels_svms} and \cite{kdtree_vol} to be closely related.

\begin{definition} \label{def:scattered}
(\cite{Leinster2013TheMO}, Def. 2.1.2) A dataset $X$ is \emph{scattered} if ${\rm e}^{-\epsilon} < \frac{1}{n - 1}$ where $\epsilon$ is the smallest distance between distinct points in $X$.
\end{definition}

\begin{theorem} \label{thm:w_kd_error_bound}
For $X$ scattered, and $x \in X$,

\begin{equation} \label{eqn:w_kd_error_bound}
    \abs{w(x) - \frac{1}{nf(x)}} \leq \frac{n(n-1)^2{\rm e}^{-2\epsilon} + n(n-1){\rm e}^{-\epsilon}}{1 - (n-1){\rm e}^{-\epsilon}},
\end{equation}

where $f$ is the kernel density associated with the Laplacian kernel $\z_X$.

\end{theorem}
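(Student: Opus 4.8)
The plan is to reduce the whole estimate to a perturbation of the identity that is controlled by the scattered hypothesis. First I would observe that evaluating the kernel density at a data point simply recovers a row sum of the similarity matrix: since $K(x_j,x_j)=1$, one has $nf(x_j)=\sum_{i}\z_X(j,i)=\paren{\z_X\one}_j$, so that $\frac{1}{nf(x_j)}=\frac{1}{(\z_X\one)_j}$. Thus the quantity to be bounded in \eqref{eqn:w_kd_error_bound} is $\abs{\paren{\z_X^{-1}\one}_j-\paren{\z_X\one}_j^{-1}}$, a comparison between the $j$-th entry of the exact weighting vector and the reciprocal of the $j$-th row sum of $\z_X$.

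Next I would write $\z_X=I+E$, where $E$ has zero diagonal and off-diagonal entries $E(j,i)=e^{-d(x_j,x_i)}\le e^{-\epsilon}$. The scattered condition of Definition~\ref{def:scattered} says precisely that every row sum of $E$ is at most $\rho\coloneqq(n-1)e^{-\epsilon}<1$, so the $\ell^{\infty}$-operator norm of $E$ is at most $\rho<1$ and the Neumann series $\z_X^{-1}=\sum_{k\ge 0}(-1)^kE^k$ converges absolutely. This is exactly where Leinster's treatment of scattered spaces enters, and it furnishes the expansion $w(x_j)=\sum_{k\ge 0}(-1)^k\paren{E^{k}\one}_j$. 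In parallel, setting $r_j\coloneqq\paren{E\one}_j=nf(x_j)-1\in[0,\rho]$, I would expand the reciprocal geometrically as $\frac{1}{nf(x_j)}=\frac{1}{1+r_j}=\sum_{k\ge 0}(-1)^k r_j^{\,k}$.

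The heart of the argument is a term-by-term comparison of these two series. The $k=0$ terms are both $1$, and the $k=1$ terms are both $r_j$ (since $\paren{E\one}_j=r_j$ by definition), so the two leading orders cancel exactly and the discrepancy is governed entirely by the tail $k\ge 2$:
\[
\abs{w(x_j)-\frac{1}{nf(x_j)}}\le\sum_{k\ge 2}\abs{\paren{E^{k}\one}_j-r_j^{\,k}}\le\sum_{k\ge 2}\paren{\paren{E^{k}\one}_j+r_j^{\,k}}.
\]
I would then bound each piece separately: the geometric tail $\sum_{k\ge 2}r_j^{\,k}$ is immediate, while the matrix-power tail $\sum_{k\ge 2}\paren{E^{k}\one}_j$ requires a uniform estimate on the iterated row sums. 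Either a row-sum contraction, giving $\paren{E^{k}\one}_j\le\rho^{k}$, or a cruder entrywise path-counting estimate (at most $n-1$ intermediate vertices at each of the $k$ steps, each contributing a factor $e^{-\epsilon}$) produces a geometric series in $\rho=(n-1)e^{-\epsilon}$; summing it and clearing the common denominator $1-(n-1)e^{-\epsilon}$ yields a bound of exactly the shape in \eqref{eqn:w_kd_error_bound}, with the explicit constants in the numerator tracking how many factors of $n$ are conceded in the per-order estimate.

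The main obstacle is this final step: controlling $\paren{E^{k}\one}_j$ for all $k$ simultaneously, uniformly in $j$. The clean operator-norm bound $\paren{E^{k}\one}_j\le\rho^{k}$ is the sharpest estimate and drives convergence, but recovering the precise constants stated in the theorem requires passing through the looser entrywise bounds, where the combinatorial factors of $n$ appear. In every version, the scattered inequality $(n-1)e^{-\epsilon}<1$ is the one hypothesis that makes each geometric series summable and hence the bound finite.
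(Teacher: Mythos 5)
Your proposal is correct and in fact proves a sharper bound than the one stated, but it organizes the estimate differently from the paper, so the comparison is worth recording. Both arguments rest on the same key input --- Leinster's Proposition 2.1.3, which in your language is the Neumann expansion $\z_X^{-1}=\sum_{k\ge0}(-1)^kE^k$ for $\z_X=I+E$, convergent because the scattered hypothesis forces $\norm{E}_\infty\le\rho\coloneqq(n-1){\rm e}^{-\epsilon}<1$ --- and both identify $nf(x)$ with the row sum $(\z_X\one)_x$. Where you diverge is in how the error is decomposed. The paper writes $\abs{w(a)-\frac{1}{nf(a)}}=\frac{1}{nf(a)}\abs{nw(a)f(a)-1}\le\abs{nw(a)f(a)-1}$ (using $nf(a)\ge1$), splits $nw(a)f(a)-1=(w(a)-1)+w(a)\sum_{b\ne a}\z_X(a,b)$ by the triangle inequality, and bounds the two pieces separately, the second via the crude estimates $w(a)\le n/(1-\rho)$ and $\sum_{b\ne a}\z_X(a,b)\le\rho$; this is exactly what produces the two numerator terms $n(n-1)^2{\rm e}^{-2\epsilon}$ and $n(n-1){\rm e}^{-\epsilon}$. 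You instead expand $1/(nf(a))=1/(1+r_a)$ as its own geometric series and compare it term by term with $w(a)=\sum_{k\ge0}(-1)^k(E^k\one)_a$, observing that the $k=0$ and $k=1$ terms cancel exactly. That cancellation buys you a genuinely better constant: with the row-sum contraction $(E^k\one)_a\le\rho^k$ the error is at most $2\rho^2/(1-\rho)$, which has no term linear in $\rho$, and it also cleanly handles the point in the paper's own write-up where the $k=1$ contribution to $\abs{w(a)-1}$ is absorbed without comment. The one thing to tighten is your final step: you do not actually need to ``pass through the looser entrywise bounds'' to recover the theorem's constants, since $2\rho^2/(1-\rho)\le(n\rho^2+n\rho)/(1-\rho)$ for all $n\ge1$ and $0\le\rho<1$, so your sharper estimate already implies the stated inequality as written.
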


In particular, consider the metric space family $tX$ for some finite subset $X$ of $\R^d$, the weighting vectors $w_t$, and kernel density estimators $f_t$ associated to $\z_{tX}$. As $t \rightarrow \infty$, the RHS of Eqn. \ref{eqn:w_kd_error_bound} converges to zero. Theorem \ref{thm:w_kd_error_bound} shows that for sufficiently large $t$, the solution to the one class SVM induced by $\z_{tX}$ is approximated with concrete bounds by $1/nf$. Na{\"i}vely, $w$ can be obtained by finding the inverse $\z_{tX}^{-1}$. Assuming $\z_{tX}$ has been calculated, this can take $O(n^{\omega})$, where currently $2 \leq \omega \leq 3$; however $1/nf$ can be calculated in $O(n^2)$, with much smaller constant multipliers in practice.

For sufficiently nice kernels, Theorem 1.1 and Corollary 1.1 in \cite{kd_estimation_with_nn} give a way to estimate the kernel density function $f$ in an asyptotically unbiased way via a normalized count of points falling in an appropriately small rectangular region, assuming the points are sampled from the probability distribution defined by the kernel. This provides a further estimate of the kernel density $f$ by

\begin{equation}
    \tilde{f}(x) = \frac{\abs{R_h(x) \cap X}}{n(2h)^{d}}
\end{equation}

where $X \subset \R^d$ a dataset with $\abs{X} = n$, and $R_h(x)$ is a $d$-cube of side length $2h$ centered at $x \in \R^d$. $\abs{R_h(x) \cap X}$ denotes the number of points in $X$ that lie in the set $R_h(x)$. This gives us a final approximation of $w$ by $\frac{1}{n\tilde{f}(x)} = \frac{(2h)^d}{\abs{R_h(x) \cap X}}$. Employing a k-d tree structure for efficient nearest neighbor search, the quantity $\abs{R_h(x) \cap X}$ can be calculated on average in $O(log(n))$ time, after an average build time cost of $O(nlog(n))$. An empirical demonstration of these approximations is given in the supplementary material. It should be noted that while the bound on the approximation in Thm. \ref{thm:w_kd_error_bound} is guaranteed when the space is scattered--that is, when the scale parameter $t$ is chosen to be large enough--in practice, we have seen that $t$ can be chosen to be much smaller, with reasonable approximation error. The data set depicted in the supplementary material requires a $t$ value of approximately $5.8 \times 10^4$ to be scattered, whereas $t=50$ was used. In practice, it was seen that the approximation $\frac{1}{\abs{R_h(x) \cap X}}$ has lower error than the normalized version with numerator $(2h)^d$.

\section{Outlier detection} \label{sec:outlier_detection}



\begin{table}[ht]
    \centering
    \begin{tabular}{lcccccc}
    \toprule
         Name & Feat  & Prec@10 & Rec@10 & F1@10 & AUC  &  AUC RDOS~\cite{tang2017local}\\
         \midrule
         breastw    & 9    & 0.97 & 0.97 & 0.97 & 0.99\\
         cardio     & 21   & 0.94 & 0.66 & 0.77 & 0.98\\
         glass      & 9    & 0.33 & 0.25 & 0.19 & 0.71  & {\bf 0.89}\\        httpKDD    & 3    & 0.94 & 1.00 & 0.97 & {\bf 0.99}  & 0.97\\
         ionosphere & 33   & 0.93 & 0.85 & 0.89 & {\bf 0.95}  & 0.94\\
         lympho     & 18   & 0.38 & 1.00 & 0.55 & 0.98 & \bf{1.00}\\
         pendigits  & 16   & 0.75 & 0.94 & 0.83 & {\bf 0.99} & 0.97\\
         pima       & 8    & 0.85 & 0.13 & 0.22 & 0.66       & {\bf 0.73}\\
         shuttle    & 9    & 0.97 & 1.00 & 0.98 & {\bf 1.00} & 0.98\\
         vowels     & 12   & 0.80 & 0.71 & 0.75 & 0.97\\
         wbc        & 30   & 0.73 & 1.00 & 0.84 & {\bf 0.98} & {\bf 0.98}\\
         wine       & 13   & 0.60 & 1.00 & 0.75 & 0.92\\
         \bottomrule
    \end{tabular}
    \caption{Performance metrics of our anomaly detection algorithm applied to the benchmark data sets of~\cite{Rayana:2016}. We report the best AUC that was achieved by RDOS~\cite{tang2017local}. In each case, optimal RDOS performance exceeded the performance of prior work.}
    \label{tab:outlier}
\end{table}
It is possible to apply weighting vectors to anomaly detection. Our algorithm, detailed below, is either competitive with or better than state-of-the-art techniques at outlier detection on benchmark data sets. The broad idea is as follows. Recall that for a given set of $m$ points $X\subset\R^n$, the weighting vector assigns a real-valued scalar value to each $x\in X$. We call this value the {\em weighting score} of $x$. The weighting score of $x$ is typically large when $x$ is a large distance from all other points of $X$. By definition, outliers of $X$ differ significantly from most other observations, and as a result, it is natural to expect the weighting score of such points to be large.

The foregoing can be used as the foundation of a simple outlier detection algorithm. Given a set of inlier data $Y\subset\R^{n}$:
\begin{enumerate}
\item Normalize $Y$ to have mean 0 and unit variance in each feature. Denote the normalizer operation by $\Phi$.\label{item:step1}
\item For any point $x \in\R^{n}$ with $x \not\in Y$, compute the weighting vector $w$ of $\Phi(Y\cup \set{x})$.\label{item:step}
\item If the weighting score of $\Phi(x)$ is among the $k$ largest entries in $w$, then classify $x$ as an outlier.
\end{enumerate}
We apply this algorithm to 12 data sets that have been reported in prior work, and we compare our results to those of~\cite{tang2017local} in Table~\ref{tab:outlier}.  The data sets reported here have inliers and outliers  labeled.  The weighting vector $w$ depends on a parameter, $t$, which must be chosen according to some objective metric. To find an optimal choice of $t$, we perform a search.  First, we split the inlier data into training, validation and testing sets, and then we randomly distribute the labeled outliers into the validation and testing sets with equal probability.  The training set serves the role of the set $Y$ in step~\ref{item:step1} in the above algorithm. Using the training and validation sets, we perform a parameter search over values of $t\in\set{1\times 10^{j},5\times 10^{j}:-5\leq j\leq 1}$.  The model identifies $x$ as an outlier if its weighting score is among the $k$ largest values.  For several metrics reported in Table~\ref{tab:outlier}, we fixed the threshold $k=10$ and indicated this with an ``@10'' suffix. This is a somewhat arbitrary choice, but it seems to work well in practice.

The optimal choice of $t$ is found by selecting the value of $t$ that maximizes the area under the receiver operating characteristic curve (AUC), which summarizes the results of varying the rank-order threshold $k$ that is used to identify outliers.  We select the value of $t$ with largest AUC, and then report results for $k=10$ applied to the testing set.   Computing the weighting vector in step~\ref{item:step} for the space $\Phi(Y\cup\set{x})$ can be done efficiently, using the standard block matrix inversion formula for $2\times 2$ matrices, and pre-computing the matrix inverse $Z_{\Phi(Y)}^{-1}$ one time. The computational cost of this is one matrix inverse operation, followed by a matrix-vector multiplication for each element in the testing set.  In practice, a sample of at most 1000 inliers effectively serve as a training set.



\section{Conclusion and future work} \label{sec:conclusions}
We have introduced the concept of magnitude and weighting vector into the machine learning community, and present theoretical as well as empirical evidence that the weighting vector can be effectively used as a boundary detector of a densely, uniformly sampled data set in Euclidean space. We detail how the weighting vector can be seen as the solution to a generalized SVM, and give methods to efficiently approximate it via nearest neighbor methods. We define an anomaly detection method based on the weighting vector, and show that it is competitive with state of the art. 

The weighting vector has solid, although abstract, topological and geometric foundations, and sits in connection to SVMs, nearest neighbor methods, and machine learning. In future work, we intend to explore further the connection of SVMs to weighting and magnitude, as well as the connection with nearest neighbor methods. In particular, the approximation given in \ref{thm:w_kd_error_bound} seems to empirically hold for $t$ smaller than the range specified in the theorem. We also have preliminary, ongoing work on a neural network layer inspired by the weighting vector. Finally, the potential for applicability to graphs is touched on briefly in the supplementary materials, and is of interest for further investigation.

\begin{ack}
The authors would like thank Mark Meckes for reading through early manuscripts of this paper and providing extremely useful feedback and discussions.
\end{ack}

\bibliographystyle{abbrv}
\bibliography{bibliography}

\begin{thebibliography}{10}

\bibitem{viability_theory}
J.-P. Aubin, A.~Bayen, and P.~Saint-Pierre.
\newblock {\em Viability Theory: New Directions}.
\newblock Springer, 01 2011.

\bibitem{Barcelo18CptEuclid}
J.~Barcel\'{o} and A.~Carbery.
\newblock {On the magnitudes of compact sets in Euclidean spaces}.
\newblock {\em American Journal of Mathematics}, 140(2):449--494, 2018.

\bibitem{svm}
B.~E. Boser, I.~M. Guyon, and V.~N. Vapnik.
\newblock A training algorithm for optimal margin classifiers.
\newblock In {\em Proceedings of the 5th Annual ACM Workshop on Computational
  Learning Theory}, pages 144--152, 1992.

\bibitem{feat_selection_svm}
P.~S. Bradley and O.~L. Mangasarian.
\newblock Feature selection via concave minimization and support vector
  machines.
\newblock In {\em Proceedings of the Fifteenth International Conference on
  Machine Learning}, ICML '98, page 82–90, San Francisco, CA, USA, 1998.
  Morgan Kaufmann Publishers Inc.

\bibitem{kd_estimation_with_nn}
T.~Cacoullos.
\newblock Estimation of a multivariate density.
\newblock {\em Annals of the Institute of Statistical Mathematics},
  18:179--189, 1966.

\bibitem{learning_from_data}
V.~Cherkassky and F.~Mulier.
\newblock Learning from data: Concepts, theory, and methods.
\newblock In {\em Learning from Data: Concepts, Theory, and Methods}, 1998.

\bibitem{cristianini2000}
N.~Cristianini and J.~Shawe-Taylor.
\newblock {\em An Introduction to Support Vector Machines and Other
  Kernel-based Learning Methods}.
\newblock Cambridge University Press, 2000.

\bibitem{viab_kernels_svms}
G.~Deffuant, L.~Chapel, and S.~Martin.
\newblock Approximating viability kernels with support vector machines.
\newblock {\em IEEE Transactions on Automatic Control}, 52(5):933--937, 2007.

\bibitem{devlin-etal-2019-bert}
J.~Devlin, M.-W. Chang, K.~Lee, and K.~Toutanova.
\newblock {BERT}: Pre-training of deep bidirectional transformers for language
  understanding.
\newblock In {\em Proceedings of the 2019 Conference of the North {A}merican
  Chapter of the Association for Computational Linguistics: Human Language
  Technologies, Volume 1 (Long and Short Papers)}, pages 4171--4186,
  Minneapolis, Minnesota, June 2019. Association for Computational Linguistics.

\bibitem{Edelsbrunner2000TopologicalPA}
H.~Edelsbrunner, D.~Letscher, and A.~Zomorodian.
\newblock Topological persistence and simplification.
\newblock {\em Discrete {\&} Computational Geometry}, 28(4):511--533, Nov 2002.

\bibitem{erdos59a}
P.~Erd\"{o}s and A.~R\'{e}nyi.
\newblock On random graphs i.
\newblock {\em Publicationes Mathematicae Debrecen}, 6:290, 1959.

\bibitem{folland1999real}
G.~Folland.
\newblock {\em {Real analysis: modern techniques and their applications}}.
\newblock Pure and applied mathematics. Wiley, 1999.

\bibitem{ProximalSVM}
G.~Fung and O.~L. Mangasarian.
\newblock Proximal support vector machine classifiers.
\newblock In {\em KDD '01}, 2001.

\bibitem{SciPyProceedings_11}
A.~A. Hagberg, D.~A. Schult, and P.~J. Swart.
\newblock Exploring network structure, dynamics, and function using networkx.
\newblock In G.~Varoquaux, T.~Vaught, and J.~Millman, editors, {\em Proceedings
  of the 7th Python in Science Conference}, pages 11 -- 15, Pasadena, CA USA,
  2008.

\bibitem{resistance-dist}
D.~Klein and M.~Randic.
\newblock Resistance distance.
\newblock {\em Journal of Mathematical Chemistry}, 12:81--95, 12 1993.

\bibitem{Leinster2013TheMO}
T.~Leinster.
\newblock The magnitude of metric spaces.
\newblock {\em Documenta Mathematica}, 18:857--905, 2013.

\bibitem{leinster2017magnitude}
T.~Leinster and M.~Shulman.
\newblock Magnitude homology of enriched categories and metric spaces, 2017.

\bibitem{lewis94}
D.~D. Lewis and W.~A. Gale.
\newblock A sequential algorithm for training text classifiers.
\newblock {\em CoRR}, abs/cmp-lg/9407020, 1994.

\bibitem{generalizedSVM}
O.~L. Mangasarian.
\newblock Generalized support vector machines.
\newblock In {\em Advances in Large Margin Classifiers}, pages 135--146. MIT
  Press, 1998.

\bibitem{MeckesPosDef}
M.~Meckes.
\newblock {Positive definite metric spaces}.
\newblock {\em Positivity}, 17:733--757, Sept 2013.

\bibitem{meckes2015magnitude}
M.~W. Meckes.
\newblock Magnitude, diversity, capacities, and dimensions of metric spaces.
\newblock {\em Potential Analysis}, 42(2):549--572, 2015.

\bibitem{meckesprivatecomm}
M.~W. Meckes.
\newblock Personal communication, September 2020.

\bibitem{otter2018magnitude}
N.~Otter.
\newblock {Magnitude meets persistence. Homology theories for filtered
  simplicial sets}, 2018.

\bibitem{scikit-learn}
F.~Pedregosa, G.~Varoquaux, A.~Gramfort, V.~Michel, B.~Thirion, O.~Grisel,
  M.~Blondel, P.~Prettenhofer, R.~Weiss, V.~Dubourg, J.~Vanderplas, A.~Passos,
  D.~Cournapeau, M.~Brucher, M.~Perrot, and E.~Duchesnay.
\newblock {Scikit-learn: Machine Learning in Python }.
\newblock {\em Journal of Machine Learning Research}, 12:2825--2830, 2011.

\bibitem{2016arXiv160605250R}
P.~{Rajpurkar}, J.~{Zhang}, K.~{Lopyrev}, and P.~{Liang}.
\newblock {SQuAD: 100,000+ Questions for Machine Comprehension of Text}.
\newblock {\em arXiv e-prints}, page arXiv:1606.05250, 2016.

\bibitem{Rayana:2016}
S.~Rayana.
\newblock {ODDS} library, 2016.

\bibitem{kdtree_vol}
J.-B. Rouquier, I.~Alvarez, R.~Reuillon, and P.-H. Wuillemin.
\newblock {A kd-tree algorithm to discover the boundary of a black box
  hypervolume or how to peel potatoes by recursively cutting them in halves}.
\newblock {\em {Annals of Mathematics and Artificial Intelligence}},
  75(3):335--350, Dec. 2015.
\newblock 11 pages.

\bibitem{sanh2020distilbert}
V.~Sanh, L.~Debut, J.~Chaumond, and T.~Wolf.
\newblock Distilbert, a distilled version of bert: smaller, faster, cheaper and
  lighter, 2020.

\bibitem{Scopigno04persistencebarcodes}
R.~Scopigno, D.~Zorin, G.~Carlsson, A.~Zomorodian, A.~Collins, and L.~Guibas.
\newblock Persistence barcodes for shapes, 2004.

\bibitem{Solow1994}
A.~R. Solow and S.~Polasky.
\newblock Measuring biological diversity.
\newblock {\em Environmental and Ecological Statistics}, 1(2):95--103, Jun
  1994.

\bibitem{lssvm}
J.~Suykens and J.~Vandewalle.
\newblock Least squares support vector machine classifiers.
\newblock {\em Neural Processing Letters}, 9:293--300, 06 1999.

\bibitem{tang2017local}
B.~Tang and H.~He.
\newblock A local density-based approach for outlier detection.
\newblock {\em Neurocomputing}, 241:171--180, 2017.

\bibitem{learning_theory_vapnik}
V.~Vapnik.
\newblock {\em The Nature of Statistical Learning Theory}, volume~8, pages
  1--15.
\newblock Springer, 01 2000.

\bibitem{willerton2009heuristic}
S.~Willerton.
\newblock Heuristic and computer calculations for the magnitude of metric
  spaces, 2009.

\bibitem{wolf-etal-2020-transformers}
T.~Wolf, L.~Debut, V.~Sanh, J.~Chaumond, C.~Delangue, A.~Moi, P.~Cistac,
  T.~Rault, R.~Louf, M.~Funtowicz, J.~Davison, S.~Shleifer, P.~von Platen,
  C.~Ma, Y.~Jernite, J.~Plu, C.~Xu, T.~L. Scao, S.~Gugger, M.~Drame, Q.~Lhoest,
  and A.~M. Rush.
\newblock Transformers: State-of-the-art natural language processing.
\newblock In {\em Proceedings of the 2020 Conference on Empirical Methods in
  Natural Language Processing: System Demonstrations}, pages 38--45, Online,
  Oct. 2020. Association for Computational Linguistics.

\bibitem{computingPH}
A.~Zomorodian and G.~Carlsson.
\newblock Computing persistent homology.
\newblock In {\em Proceedings of the Twentieth Annual Symposium on
  Computational Geometry}, SCG ’04, page 347–356, New York, NY, USA, 2004.
  Association for Computing Machinery.

\end{thebibliography}

\appendix

\section{Appendix}\label{sec:appendix}

\subsection{Active learning Experiments}
\begin{figure*}[ht]
{\small
\begin{tabular}{ccc}
  \includegraphics[scale=0.27, bb=0 0 432 288]{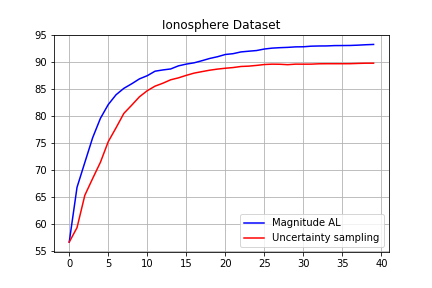} 
  &   
  \includegraphics[scale=0.27, bb=0 0 432 288]{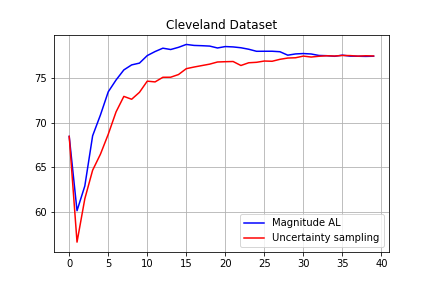} 
  &
  \includegraphics[scale=0.27, bb=0 0 432 288]{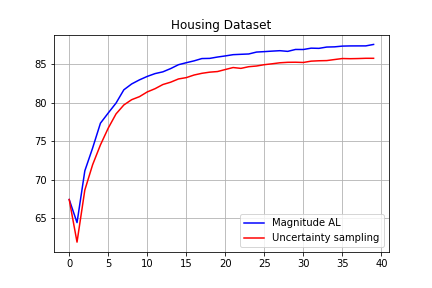} \\
 \includegraphics[scale=0.27, bb=0 0 432 288]{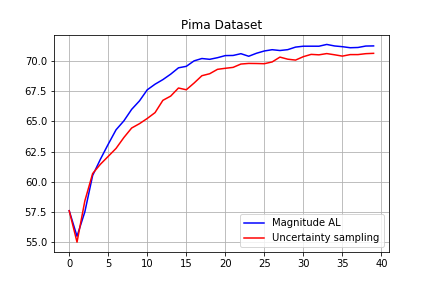} 
 &
 \includegraphics[scale=0.27, bb=0 0 432 288]{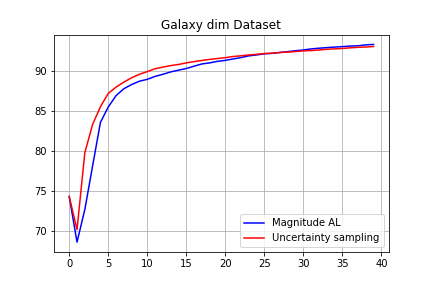} 
 &
 \includegraphics[scale=0.27, bb=0 0 432 288]{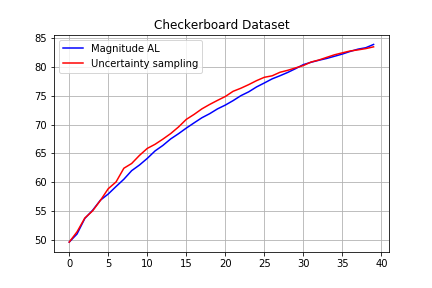} \\
\end{tabular}
\caption{Active learning results comparing the weighting vector query strategy vs the uncertainty sampling strategy. Average over 100 runs.}
\label{fig:al_results}
}
\end{figure*}

Let $\mathcal{L}$ (the labeled dataset) and $\mathcal{U}$ the (unlabeled dataset) be two subsets of the available pool of training data $X$, with $X=\mathcal{U} \cup \mathcal{L}$ and $\mathcal{U} \cap \mathcal{L}=\emptyset$. An iteration of the algorithm will pick some points in  $\mathcal{U}$ to be labeled by an oracle (transferring them to  $\mathcal{L}$). The current model will be then updated using the new updated dataset  $\mathcal{L}$ and its corresponding labels. For simplicity we will state the algorithm for a binary classification problem i.e. when $L=\{L_{0},L_{1}\}$, however it can be trivially extended to a multi-class problem. 

The intuition behind the algorithm is simple: at each iteration $i$, we assign every training data point to one of the sets  $\tilde{X_0}$ or $\tilde{X_1}$ according to its predicted label by the current classifier $f_i$. We will calculate the corresponding weight vectors $w_{\tilde{X_{0}}}$ and $w_{\tilde{X_{1}}}$. Then, we choose to label the point with the minimum value (interior point) and the with the maximum value (likely to be in the boundary) for both sets $\tilde{X_0}$ and $\tilde{X_1}$. By choosing  this way we are aiming to: (a) reinforce, validate and refine high confidence classifier information (labels) acquired in prior iterations (exploitation) and (b) to acquire labels in the predicted class boundaries where our classifier confidence is potentially lower (exploration). The proposed active learning algorithm is stated in Algorithm \ref{alg:al}. 

\begin{algorithm}
    \caption{Active learning via weighting vector.}
    \label{alg:al}
    \begin{algorithmic}
        \Require{ Data set $X$ }
        \State{ $\mathcal{L}=\emptyset$; $\mathcal{U}=X$ }
        \State{ initialize $\mathcal{L}$ ;  $\mathcal{U}=X-\mathcal{L}$;  with it's corresponding $\mathcal{Y_{\mathcal{L}}}$ }
        \State{ $f={\bf train\_classifier}(\mathcal{L}$,$\mathcal{Y_{\mathcal{L}}}$) }
        \While{ (not converged) {\bf or} (labeling budget not reached) }
            \State{ $\tilde{X_i} = \{ x \in X \mid f(x)  = i \}$ for $i = 0, 1$. }
            \State{ calculate weighting vectors $w_{\tilde{X_{i}}}$ }
            \State{ $Q_{\min,i}=\underset{\mathcal{U}}{\arg\min}  \ {\text{abs} (w_{\tilde{X_{i}}}}) $ for $i = 0, 1$ }
            \State{ $Q_{\max,i}=\underset{\mathcal{U}}{\arg\max}  \ {\text{abs}(w_{\tilde{X_{i}}}} )$ for $i = 0, 1$ }
            \State{ $\mathcal{Y_{\mathcal{Q}}}$=query\_labels($Q_{\min,0}$,$Q_{\max,0}$,$Q_{\min,1}$,$Q_{\max,1}$) }
            \State{ $\mathcal{L}=\mathcal{L} \cup \{Q_{\min,0}$,$Q_{\max,0}$,$Q_{\min,1}$,$Q_{\max,1}\}$ }
            \State{ $\mathcal{Y_{\mathcal{L}}}=\mathcal{Y_{\mathcal{L}}} \cup  \mathcal{Y_{\mathcal{Q}}}$ }
            \State{ $\mathcal{U}=X-\mathcal{L}$; }
            \State{ $f={\bf train\_classifier}(\mathcal{L},\mathcal{Y_{\mathcal{L}}}$) }
        \EndWhile
    \Ensure{$f$}
    \end{algorithmic}
\end{algorithm}

In order to assess the effectiveness of the weighting-vector-based active learning (AL) algorithm proposed, we compared Algorithm \ref{alg:al} to the simplest but highly effective and most commonly used query AL framework: uncertainty sampling \cite{lewis94}. In this framework, the AL algorithm queries the instances for which it is least certain about how to label (i.e. for many algorithms $p(label\|x) \approx 0.5$ or where the decision function is close to $0$). For simplicity we used a kernelized Ridge regression model \cite{cristianini2000} (also refer as to LS-SVM \cite{lssvm} or proximal SVM \cite{ProximalSVM}). Laplacian kernels were used both as magnitude to calculate the weighting vector and as classification kernel ($k(x, y) = \exp( -\gamma \| x-y \|_1)$ with $\gamma=0.1$. At each iteration of Algorithm \ref{alg:al} the classifier learned after obtained labels from the oracle has the form $f(x)=K(x,\mathcal{L})'w-w_0$, where $w_0$ is the bias term.

We performed experiments on five classic benchmark datasets from the UCI repository taking 67\% of the data as training pool and the remaining 33\% as a testing set. Note that the weighing-vector-inspired algorithm chooses $4$ points per iterations so we picked the four more uncertain points for the uncertainty sampling algorithm to be fair. 

Figure \ref{fig:al_results} shows average performance curves over 100 runs. The performance from the weighting vector algorithm seems to perform better in four out of the five datasets and slightly worse on the Galaxy dim. and Checkerboard datasets.

\section{Useful properties of magnitude}
\label{sec:properties}
In this section, we offer some techniques that are useful when working with weighting vectors. We discuss how the computation of the weighting vector may be effectively computed by breaking the computation into smaller pieces and ``gluing'' the results together.

\subsection{Inclusion-Exclusion for Weight and Magnitude}
We demonstrate a practical way to calculate the weighting  vector for a set $Z \coloneqq X \cup Y$ that is the union of two finite $X,Y\subset \R^n$. To approach this, first we investigate the case when $X$ and $Y$ are disjoint. Then we will look at the case $Y \subset X$, and show how to calculate either $w_X$ or $w_Y$ when one knows the other. Finally we will state an inclusion-exclusion principle for magnitude, as  well  as the weighting vector.

Before proceeding, we recall the definition of the \textit{Schur complement}.

\begin{definition} Let
$M \coloneqq 
\begin{bmatrix}
A & B \\
C & D
\end{bmatrix}$
be the block matrix where the matrices $A, B, C, D$ are of dimensions $n\times n,  n \times  m, m \times n,$ and $m \times m$ respectively. If $D$ is invertible, then the \textit{Schur complement} of $D$ in $M$ is the $n \times n$ matrix

    \begin{equation*}
        M/D = A - BD^{-1}C.
    \end{equation*}
    
\noindent Similarly, if $A$ is invertible, then the Schur complement of $A$ in $M$ is the $m \times m$ matrix

    \begin{equation*}
        M/A = D - CA^{-1}B.
    \end{equation*}
\end{definition}

Let $\emptyset \neq Y \subset X \subset \R^{n}$ be finite sets. Without loss of generality, we can index the points of $X$ such that the first $\abs{Y}$ of them correspond to those points in $Y$. Then we can see that $\z_X$ can be written as a block matrix
\begin{align}\label{eqn:zeta_block}
\z_X = 
\begin{bmatrix}
\z_Y & \z_{Y, \bar{Y}} \\
\z_{Y, \bar{Y}}^T & \z_{\bar{Y}}
\end{bmatrix},
\end{align}
where $\bar{Y} = X \setminus Y$, and $\z_{Y, \bar{Y}}$ denotes the submatrix of $\z_X$ formed by taking the rows corresponding to $Y$ and columns corresponding to $\bar{Y}$. We can now rewrite the formula $\z_Xw = \one$ using equation \ref{eqn:zeta_block} as the system of equations
\begin{align*}
\z_Y \restr{w_X}{Y} + \z_{Y, \bar{Y}} \restr{w_X}{\bar{Y}} 
&=
\one_Y \\
\z_{Y, \bar{Y}}^T \restr{w_X}{Y} + \z_{\bar{Y}} \restr{w_X}{\bar{Y}} 
&=
\one_{\bar{Y}},
\end{align*}
where $\one_Y$ and $\one_{\bar{Y}}$ are respectively the $\abs{Y} \times 1$ and $\abs{\bar{Y}} \times 1$ column vectors of all ones. Since both $\z_Y$ and $\z_{\bar{Y}}$ are invertible, we can form both of the Schur complements $\z_X/\z_Y$ and $\z_X / \z_{\bar{Y}}$. With these in hand, we can write
\begin{align}
\restr{w_X}{Y}
&=
(\z_X/\z_{\bar{Y}})^{-1} (\one_Y - \z_{Y, \bar{Y}} w_{\bar{Y}}) \label{eqn:disjoint_gluing_Y} \\
\restr{w_X}{\bar{Y}}
&=
(\z_X/\z_{Y})^{-1} (\one_{\bar{Y}} - \z_{Y, \bar{Y}}^T w_{Y}), \label{eqn:disjoint_gluing_Ybar} 
\end{align}
where $w_Y$ and $w_{\bar{Y}}$ are the weight vectors for $Y$ and $\bar{Y}$ respectively, and $\restr{w_X}{Y}$ is the weight vector of $X$, restricted to those indices corresponding to $Y$. Thus if we know $w_Y$ and $w_{\bar{Y}}$, equations \ref{eqn:disjoint_gluing_Y} and \ref{eqn:disjoint_gluing_Ybar} give a way to compute $w_X$.

Next, for finite sets $Y \subset X \subset \R^n$ we wish to calculate either the weight vector $w_X$ or $w_Y$ given the other. 

\begin{definition}\label{def:rho}
For a block matrix $M \coloneqq \begin{bmatrix} 
A & B \\
C & D 
\end{bmatrix}$ with $A$ invertible, define 
\begin{equation*}
    \rho_{MA} \coloneqq \begin{bmatrix}
A^{-1} B (M/A)^{-1} C A^{-1} & -A^{-1} B (M/A)^{-1} \\
-(M/A)^{-1} C A^{-1} & (M/A)^{-1}
\end{bmatrix}.
\end{equation*}
\end{definition}

\noindent Recall that for a block matrix $M$ as in Definition \ref{def:rho}, 
\begin{equation}\label{eqn:M_A_rho}
    M^{-1}
    =
    \begin{bmatrix}
    A^{-1} & 0 \\
    0 &0 \\
    \end{bmatrix}
    + \rho_{MA}.
\end{equation}












\begin{definition}
For $Y \subseteq X \subset \R^n$ finite sets, assume $\z_X$ is in block matrix format as in Equation \ref{eqn:zeta_block}. Define the matrix
\begin{equation*}
\rho_{XY} = \rho_{\z_X \z_Y}
\end{equation*}
\noindent where $\rho_{XY}$ is taken to be the zero matrix when $Y  = X$, and $\rho_{XY}$ is taken to be $\z_X$ when $Y = \emptyset$.
\end{definition}

\begin{lemma}\label{lem:weight_vector_subset}
For finite sets $Y \subset X \subset \R^n$, let $P_{XY}$ be a permutation matrix such that

\begin{equation*}
    P_{XY} \z_X P_{XY} = \begin{bmatrix}
    \z_Y & \z_{Y \bar{Y}} \\
    \z_{Y \bar{Y}}^T & \z_{\bar{Y}}.
    \end{bmatrix}
\end{equation*}
\noindent Then
\begin{align*}
w_X
&=
P_{XY}\begin{bmatrix}
w_Y \\
0
\end{bmatrix}
 + 
 P_{XY}\rho_{XY}\one,
\text{ \hspace{3pt} and} \\
 \Mag{X} 
 &= 
 \Mag{Y} 
 + 
 \one^T\rho_{XY}\one.
\end{align*}
\end{lemma}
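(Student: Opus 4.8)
The plan is to derive both identities from the block decomposition of $\z_X^{-1}$ recorded in Equation~\ref{eqn:M_A_rho}, combined with the elementary fact that a permutation matrix fixes the all-ones vector. I would first note that $P_{XY}$ may be taken to be a \emph{symmetric} permutation matrix, hence an involution with $P_{XY}^{-1}=P_{XY}$; such a choice always exists because the indices of $Y$ sitting outside the top block can be paired off, by disjoint transpositions, with the equally numerous indices of $\bar{Y}$ sitting inside it. This symmetry is what licenses writing the final formula with $P_{XY}$ rather than its transpose. Writing $\tilde{\z}_X\coloneqq P_{XY}\z_X P_{XY}$ for the block form in the statement, involutivity gives $\z_X^{-1}=P_{XY}\tilde{\z}_X^{-1}P_{XY}$, and applying Equation~\ref{eqn:M_A_rho} to $\tilde{\z}_X$ with $A=\z_Y$ yields
\begin{equation*}
\tilde{\z}_X^{-1}=\begin{bmatrix}\z_Y^{-1} & 0 \\ 0 & 0\end{bmatrix}+\rho_{XY}.
\end{equation*}
This is precisely where $\rho_{XY}$ enters, so its definition is carrying the real content.

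Next I would compute the weighting vector directly as $w_X=\z_X^{-1}\one=P_{XY}\tilde{\z}_X^{-1}P_{XY}\one$. The key simplification is that $P_{XY}\one=\one$, since permuting the entries of a constant vector changes nothing. Substituting the block decomposition and using the definition $w_Y=\z_Y^{-1}\one_Y$ gives
\begin{equation*}
\begin{bmatrix}\z_Y^{-1} & 0 \\ 0 & 0\end{bmatrix}\one=\begin{bmatrix}w_Y\\0\end{bmatrix},
\end{equation*}
from which the first identity $w_X=P_{XY}\begin{bmatrix}w_Y\\0\end{bmatrix}+P_{XY}\rho_{XY}\one$ follows by linearity.

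For the magnitude identity I would left-multiply the expression for $w_X$ by $\one^T$ and again use that $\one^T P_{XY}=\one^T$. The first term collapses to $\one_Y^T w_Y=\Mag{Y}$ and the second to $\one^T\rho_{XY}\one$, giving $\Mag{X}=\Mag{Y}+\one^T\rho_{XY}\one$. The degenerate cases $Y=X$ and $Y=\emptyset$ are handled directly by the boundary conventions placed on $\rho_{XY}$ in the preceding definition, under which both formulas hold trivially.

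The argument is essentially a substitution once Equation~\ref{eqn:M_A_rho} is in hand, so I do not anticipate a genuine mathematical obstacle. The only step requiring care is the bookkeeping around $P_{XY}$: one must confirm it can be chosen symmetric so that conjugation by it commutes with inversion and with the reduction to $\z_Y$, while the invariance $P_{XY}\one=\one$ (equivalently $\one^T P_{XY}=\one^T$) removes the permutation from every place where it would otherwise obstruct the collapse to $w_Y$ and $\Mag{Y}$.
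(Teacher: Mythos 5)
Your proof is correct and follows essentially the same route as the paper's, whose entire proof is the one-line instruction to set $M = P_{XY}\z_X P_{XY}$, apply Equation~\ref{eqn:M_A_rho}, and multiply by $\one$. The only addition is your justification that $P_{XY}$ can be chosen as a symmetric involution (so that conjugation inverts cleanly and $P_{XY}\one=\one$ does the rest), a detail the paper leaves implicit but which your argument supplies correctly.
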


\begin{proof}
Set $M = P_{XY}\z_{X}P_{XY}$, employ Eqn. \ref{eqn:M_A_rho}, and multiply on the right by $\one$.
\end{proof}



\noindent We can now calculate the weight vector of $X \cup Y$ where $X$ and $Y$ are not necessarily disjoint. This can be viewed as an inclusion-exclusion principle that applies to weight vectors as well as magnitude.

\begin{theorem}
For finite sets $X, Y \subset \R^n$, set $Z = X \cup Y$. Then we have

\begin{align*}
    w_{Z} 
    &=
    P_{Z X}\left( \begin{bmatrix}
    w_X \\ 0
    \end{bmatrix} 
    + 
    \rho_{Z X}\one \right) 
    +
    P_{Z Y} \left( \begin{bmatrix}
    w_Y \\ 0
    \end{bmatrix} 
    + 
    \rho_{Z Y} \one \right) \\
    &- 
    P_{Z X \cap Y} \left( \begin{bmatrix}
    w_{X \cap Y} \\ 0
    \end{bmatrix} 
    - 
    \rho_{Z X \cap Y}\one \right), \text{\hspace{3pt} and} \\
    \Mag{Z}
    &=
    \Mag{X}
    + 
    \Mag{Y}
    -
    \Mag{X \cap Y} \\
    &+
    \one^T \rho_{Z X} \one  
    +
    \one^T \rho_{Z Y} \one 
    -
    \one^T \rho_{Z X \cap Y} \one.
\end{align*}

\end{theorem}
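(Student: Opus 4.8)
The plan is to obtain both identities as an immediate signed combination of three applications of Lemma~\ref{lem:weight_vector_subset}. Since $Z = X \cup Y \subset \R^n$ is finite, $\z_Z$ is positive definite, and therefore every principal submatrix of $\z_Z$---in particular $\z_X$, $\z_Y$, $\z_{X\cap Y}$ and each of their complementary blocks---is invertible. Hence the Schur complements of Definition~\ref{def:rho} all exist, and the correction matrices $\rho_{Z X}$, $\rho_{Z Y}$, $\rho_{Z X \cap Y}$ together with the permutations $P_{Z X}$, $P_{Z Y}$, $P_{Z X \cap Y}$ are well-defined. The key structural observation is that $X$, $Y$, and $X \cap Y$ are all subsets of $Z$, so the lemma (and the block-inverse identity in Equation~\ref{eqn:M_A_rho} underlying it) applies verbatim to each of the three inclusions.

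First I would apply Lemma~\ref{lem:weight_vector_subset} separately to $X \subset Z$, to $Y \subset Z$, and to $(X\cap Y) \subset Z$. Each application yields an expression for the \emph{same} vector $w_Z$:
\begin{align*}
w_Z &= P_{Z X}\left( \begin{bmatrix} w_X \\ 0 \end{bmatrix} + \rho_{Z X}\one \right), \\
w_Z &= P_{Z Y}\left( \begin{bmatrix} w_Y \\ 0 \end{bmatrix} + \rho_{Z Y}\one \right), \\
w_Z &= P_{Z X \cap Y}\left( \begin{bmatrix} w_{X\cap Y} \\ 0 \end{bmatrix} + \rho_{Z X \cap Y}\one \right).
\end{align*}
Writing the three right-hand sides as $E_X$, $E_Y$, $E_{X\cap Y}$, each equals $w_Z$, and since the coefficients $(1,1,-1)$ sum to one, the signed combination gives $E_X + E_Y - E_{X\cap Y} = w_Z + w_Z - w_Z = w_Z$, which is precisely the claimed weighting-vector identity. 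The magnitude identity then follows by left-multiplying the vector identity by $\one^T$: using $\Mag{\,\cdot\,} = \one^T w_{(\cdot)}$, the permutation invariance $\one^T P = \one^T$, and $\one^T\begin{bmatrix} w_W \\ 0 \end{bmatrix} = \Mag{W}$, each $\one^T E_W$ collapses to $\Mag{W} + \one^T\rho_{Z W}\one$, and the same combination yields the stated inclusion-exclusion formula for $\Mag{Z}$. Equivalently, one may apply the magnitude half of Lemma~\ref{lem:weight_vector_subset} three times in the identical pattern.

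I expect the only delicate point to be the sign and ordering bookkeeping of the $\rho$ correction terms. The derivation above is valid precisely because each $\rho$ term enters with the same convention as in Lemma~\ref{lem:weight_vector_subset}, so the subtracted intersection term should carry $+\rho_{Z X \cap Y}\one$ to match the other two; the signed combination is legitimate only because the three scalar coefficients sum to $1$. A secondary check is that $\begin{bmatrix} w_W \\ 0 \end{bmatrix}$ is zero-padded to length $\abs{Z}$ and that $P_{Z W}$ reorders $Z$ so the indices of $W$ come first, exactly as in the lemma. Once these conventions are pinned down the argument is purely formal, since all of the substantive linear algebra---the Schur-complement block inversion---has already been absorbed into Lemma~\ref{lem:weight_vector_subset}.
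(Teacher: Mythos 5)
Your proposal is correct and is essentially identical to the paper's own (one-line) proof: apply Lemma~\ref{lem:weight_vector_subset} to each of the inclusions $X\subset Z$, $Y\subset Z$, $X\cap Y\subset Z$ and take the signed combination with coefficients $(1,1,-1)$, which works because they sum to $1$. Your remark about the sign is well taken: the derivation yields $+\rho_{Z X\cap Y}\one$ inside the last bracket, which is consistent with the theorem's magnitude formula (where that term appears as $-\one^T\rho_{Z X\cap Y}\one$ after distributing the overall minus), so the minus sign printed inside the final parenthesis of the stated weighting-vector identity is a typo in the paper rather than a gap in your argument.
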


\begin{proof}
This follows by applying Lemma \ref{lem:weight_vector_subset} to each subset considered, e.g.

\begin{align*}
    w_Z 
    &= 
    P_{ZX}\begin{bmatrix}
    w_X \\ 0
    \end{bmatrix}
    +
    P_{ZX}\rho_{ZX}\one.
\end{align*}
\end{proof}



    

\subsection{Proof of theorem \ref{thm:w_kd_error_bound}}

\begin{figure}[ht]
    \centering
    \includegraphics[width=0.9\textwidth, bb=0 0 1339 485]{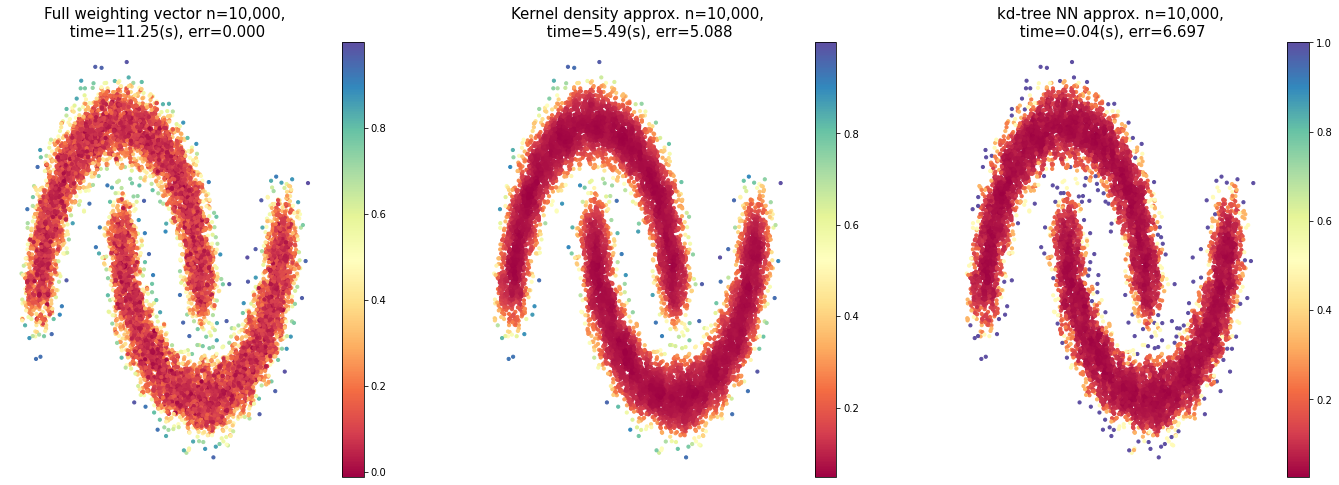}
    \caption{Each image depicts the weighting vector or an approximation computed for the moons \cite{scikit-learn} data set consisting of 10,000 points. Error computed is the $l_2$ norm of the difference between the approximating vector and $w$. The left image shows the full weighting vector $w$ with $t=50$. The center image shows the approximation by $1/nf$ with again $t=50$. The right image shows approximation by $1/n\tilde{f}$ with ball search radius $0.03$ using the $l_{\infty}$ norm.}
    \label{fig:weight_approx}
\end{figure}

\begin{theorem*} 
For $X$ scattered, and $x \in X$,

\begin{equation} 
    \abs{w(x) - \frac{1}{nf(x)}} \leq \frac{n(n-1)^2{\rm e}^{-2\epsilon} + n(n-1){\rm e}^{-\epsilon}}{1 - (n-1){\rm e}^{-\epsilon}},
\end{equation}

where $f$ is the kernel density associated with the Laplacian kernel $\z_X$.

\end{theorem*}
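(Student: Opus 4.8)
The plan is to translate the statement into linear algebra and then exploit the ``scattered'' hypothesis to control $\z_X^{-1}$ by a convergent Neumann series. First I would observe that for $x=x_k\in X$ the kernel density estimator satisfies $nf(x_k)=\sum_i \z_X(k,i)=(\z_X\one)_k$, so the claim is a bound on $\abs{(\z_X^{-1}\one)_k - 1/(\z_X\one)_k}$, i.e.\ a comparison between the $k$-th entry of $w=\z_X^{-1}\one$ and the reciprocal of the $k$-th row sum of $\z_X$.

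Next I would write $\z_X = I + E$, where $E$ is the off-diagonal part with $E_{kj}=\mathrm e^{-d(x_k,x_j)}\le \mathrm e^{-\epsilon}$ for $j\neq k$ and zero diagonal. The scattered condition $\mathrm e^{-\epsilon}<1/(n-1)$ is exactly the statement that every row sum of $E$ is at most $(n-1)\mathrm e^{-\epsilon}<1$, so $\norm{E}_\infty<1$. This is the key point: it guarantees that $\z_X^{-1}=\sum_{m\ge 0}(-E)^m$ converges and that $\norm{\z_X^{-1}}_\infty\le 1/(1-(n-1)\mathrm e^{-\epsilon})$, which is precisely the denominator appearing in the claimed bound. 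In particular every weight obeys $\abs{w_j}\le 1/(1-(n-1)\mathrm e^{-\epsilon})$.

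With this in hand I would derive a pointwise identity for the error. Reading off the $k$-th row of $\z_X w=\one$ gives $w_k = 1-\sum_{j\neq k}\z_X(k,j)\,w_j$; writing $r_k:=\sum_{j\neq k}\z_X(k,j)$ so that $(\z_X\one)_k=1+r_k$ and $1/(\z_X\one)_k = 1 - r_k/(1+r_k)$, subtraction yields \[ w_k - \frac{1}{(\z_X\one)_k} = \sum_{j\neq k}\z_X(k,j)\paren{\frac{1}{(\z_X\one)_k}-w_j}. \] Bounding the $n-1$ summands by $\z_X(k,j)\le \mathrm e^{-\epsilon}$, by $0<1/(\z_X\one)_k\le 1$, and by the uniform weight bound above, then collecting the resulting terms over the common denominator $1-(n-1)\mathrm e^{-\epsilon}$, assembles the right-hand side of the claim; finally I would note each factor is nonnegative and that the bound tends to $0$ as the points separate (e.g.\ under $t\to\infty$), recovering the remark that follows the theorem.

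The main obstacle I anticipate is not the existence of the estimate but the bookkeeping needed to land on the exact stated numerator $n(n-1)^2\mathrm e^{-2\epsilon}+n(n-1)\mathrm e^{-\epsilon}$: a careful, cancellation-aware treatment of the identity above in fact produces a strictly smaller bound of order $(n-1)^2\mathrm e^{-2\epsilon}/(1-(n-1)\mathrm e^{-\epsilon})$, so the stated inequality must arise from a deliberately crude step -- most naturally from passing through the $\ell_1$ norm (or summing the per-point contributions over all $n$ points) rather than keeping the sharp $\ell_\infty$ control, which is what introduces the extra factor of $n$. The delicate part is therefore to route the estimate so that the factor of $n$ and the additive structure $n(n-1)\mathrm e^{-\epsilon}\paren{1+(n-1)\mathrm e^{-\epsilon}}$ appear, while keeping $\norm{E}_\infty<1$ as the single hypothesis that makes every geometric sum converge.
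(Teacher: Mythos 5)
Your proposal is correct, and it in fact proves a sharper bound than the one stated, which of course implies the theorem. The shared engine is the same in both arguments: the scattered hypothesis says exactly that the off-diagonal part $E$ of $\z_X=I+E$ has $\norm{E}_\infty\le (n-1)\mathrm{e}^{-\epsilon}<1$, so the Neumann series for $\z_X^{-1}$ converges; the paper invokes this in the guise of Leinster's Proposition 2.1.3, writing $\z_X^{-1}(a,b)=\sum_k(-1)^k\mu_k(a,b)$ with the path-sum bound $\mu_k(a,b)\le((n-1)\mathrm{e}^{-\epsilon})^k$. Where you diverge is in how the series is used. The paper works entrywise with the $\mu_k$ expansion, reduces to $\abs{nw(a)f(a)-1}$ using $1/(nf(a))\le 1$, and splits that into $\abs{w(a)-1}+\abs{w(a)\sum_{b\neq a}\z_X(a,b)}$, estimating each piece from the series (the factor of $n$ in the stated numerator comes from summing a $b$-independent bound over all $n$ values of $b$). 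You instead use the series only once, to get the uniform bound $\abs{w_j}\le 1/(1-(n-1)\mathrm{e}^{-\epsilon})$, and then exploit the exact residual identity $w_k-1/(\z_X\one)_k=\sum_{j\neq k}\z_X(k,j)\bigl(1/(\z_X\one)_k-w_j\bigr)$, which I have checked. Crude term-by-term bounds there give $\bigl(2(n-1)\mathrm{e}^{-\epsilon}-(n-1)^2\mathrm{e}^{-2\epsilon}\bigr)/\bigl(1-(n-1)\mathrm{e}^{-\epsilon}\bigr)$, which is dominated by the stated right-hand side for all $n\ge 2$. Your approach buys a cleaner argument and a strictly better constant (no factor of $n$); the paper's buys nothing extra here, though its entrywise expansion is the form that generalizes to the $K$-scattered remark in the appendix. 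One correction of emphasis: the ``main obstacle'' you identify --- reverse-engineering the exact numerator $n(n-1)^2\mathrm{e}^{-2\epsilon}+n(n-1)\mathrm{e}^{-\epsilon}$ --- is not an obstacle at all. An upper bound smaller than the stated one already establishes the stated inequality, so you should simply prove your sharper estimate and observe that it is dominated by the claimed right-hand side; there is no need to deliberately coarsen the argument to match the paper's constants.
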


\begin{proof}
For $a, b \in X$, write $\mu(a, b) := \z^{-1}_X(a, b)$. Let $\epsilon$ denote the smallest distance between distinct points of $X$. Proposition 2.1.3 in \cite{Leinster2013TheMO} gives that for $X$ scattered we can write

    \begin{equation*}
        \mu(a, b) = \sum_{k = 0}^{\infty} (-1)^k \mu_k(a, b)
    \end{equation*}
    
where 

    \begin{equation*}
        \mu_k(a, b) = \sum_{a = a_0 \neq \cdots \neq a_k = b} \z_X(a, a_1) \z_X(a_1, a_2) \cdots \z_X(a_{k - 1}, b)
    \end{equation*}
    
where the sum is over all $a_0, \dots, a_k \in X$ with $a_0 = a$, $a_k = b$, and $a_{j - 1} \neq a_j$ for all $1 \leq j \leq k$. In particular, $\mu_0$ is the identity matrix. The proof of Prop. 2.1.3 in \cite{Leinster2013TheMO} gives that $\mu_k(a, b) \leq ((n - 1){\rm e}^{-\epsilon})^k$ for all $a, b \in X$. This gives that 

    \begin{equation*}
        \mu(a, b) = \sum_{k = 0}^{\infty}(-1)^k \mu_k(a, b) \leq \sum_{k = 0}^{\infty} \mu_k(a, b) \leq \sum_{k = 0}^{\infty} ((n - 1){\rm e}^{-\epsilon})^k = \frac{1}{1 - (n - 1){\rm e}^{-\epsilon}}.
    \end{equation*}
    
The final equality due to employing geometric series, since $(n - 1){\rm e}^{-\epsilon} < 1$ by assumption of $X$ being scattered. Let $w$ be the weighting vector of $X$, and note that for $a \in X$

    \begin{align}
        w(a) \nonumber
        &=
        \sum_b \sum_{k = 0}^{\infty} (-1)^k\mu_k(a, b)
        =
        \sum_b \mu_0(a, b) + \sum_b \sum_{k = 1}^{\infty} (-1)^k \mu_k(a, b) \nonumber \\ 
        & =
        1 + \sum_b \sum_{k = 0}^{\infty} (-1)^k \mu_k(a, b). 
    \end{align}
    
Then 

    \begin{align}
        & \abs{w(a) - \frac{1}{nf(a)}} \nonumber
        = 
        \abs{\frac{1}{nf(a)}(nw(a)f(a) - 1)} 
        \leq 
        \abs{\frac{1}{nf(a)}}\abs{nw(a)f(a) - 1} \\[10pt]
        & \leq 
        \abs{nw(a)f(a) - 1}
        =
        \abs{nw(a)\frac{1}{n}\sum_b \z_X(a, b) - 1}
        =
        \abs{w(a) + w(a)\sum_{b \neq a}\z_X(a, b) - 1} \nonumber \\[10pt]
        & \leq
        \abs{w(a) - 1} + \abs{w(a)\sum_{b \neq a} \z_X(a, b)}
        \leq \abs{1 + \sum_b \sum_{k = 0}^{\infty} (-1)^k \mu_k(a, b) - 1} + \abs{\frac{n(n - 1){\rm e}^{-\epsilon}}{1 - (n - 1){\rm e}^{-\epsilon}}} \nonumber \\[10pt]
        & \leq
        \abs{\sum_b \frac{((n - 1){\rm e}^{-\epsilon})^2}{1 - (n - 1){\rm e}^{-\epsilon}}} + \frac{n(n - 1){\rm e}^{-\epsilon}}{1 - (n - 1){\rm e}^{-\epsilon}}
        =
        \frac{n(n - 1)^2{\rm e}^{-2\epsilon} + n(n - 1){\rm e}^{-\epsilon}}{1 - (n - 1){\rm e}^{-\epsilon}}.
    \end{align}
\end{proof}

For a positive definite kernel $K$ on $X$ having $K(x, x) = 1$ for all $x \in X$, denote by $K_{max}$ the largest value that $K$ takes on distinct points of $X$. Say $X$ is \emph{$K$-scattered} if $K_{max} < \frac{1}{n - 1}$. Then Thm. \ref{thm:w_kd_error_bound} holds for $K$ with ${\rm e}^{-\epsilon}$ replaced by $K_{max}$. Theorem \ref{thm:w_kd_error_bound} heavily leverages Prop. 2.1.3 in \cite{Leinster2013TheMO}, which in fact can be modified slightly to apply to any real valued matrix whose diagonal entries are 1, and whose non-diagonal entries have absolute value strictly less than $\frac{1}{n - 1}$, giving a nice formula for the entries of the inverse matrix.

Figure \ref{fig:weight_approx} shows the complete weighting vector and two approximation methods outlined in Section \ref{sec:nn_approx} for the moons data set \cite{scikit-learn}.

\subsection{Weighting vector on graphs}

When computing the weighting vector of a graph, one typically constructs a metric space whose points are indexed by the nodes of the graph, and distance derived from the structure of the edges; e.g. by taking the shortest path length between two nodes. We have observed empirically that the resistance distance \cite{resistance-dist} seems to behave better than the distance given by shortest path length when computing the weighting vector.

We can visualize the inverse relationship between $w(\nu)$ for $\nu$ some node in the graph, and the number of nearest neighbors to $\nu$ in the corresponding metric space. In Figure \ref{fig:graph_nn_weight_corr}, the order of each vertex is plotted with respect to its magnitude. Although order is not exactly proportional to number of nearest neighbors in the metric space, under most reasonable metrics and with a large value of $t$, it serves as a very good proxy.

Given suitably large $t$, the graph is scattered--namely, we need $t > \frac{\log(n-1)}{\epsilon}$, where $n$ is the number of nodes in the graph, and $\epsilon$ is the minimum distance between any two nodes. As seen in Figures 6 and 7, the inverse relationship $w(x) \approx \frac{1}{nf(x)}$ is clear, as predicted by Theorem 7. 

\begin{figure}
\centering
\includegraphics[width = 0.6\textwidth, bb=0 0 450 250]{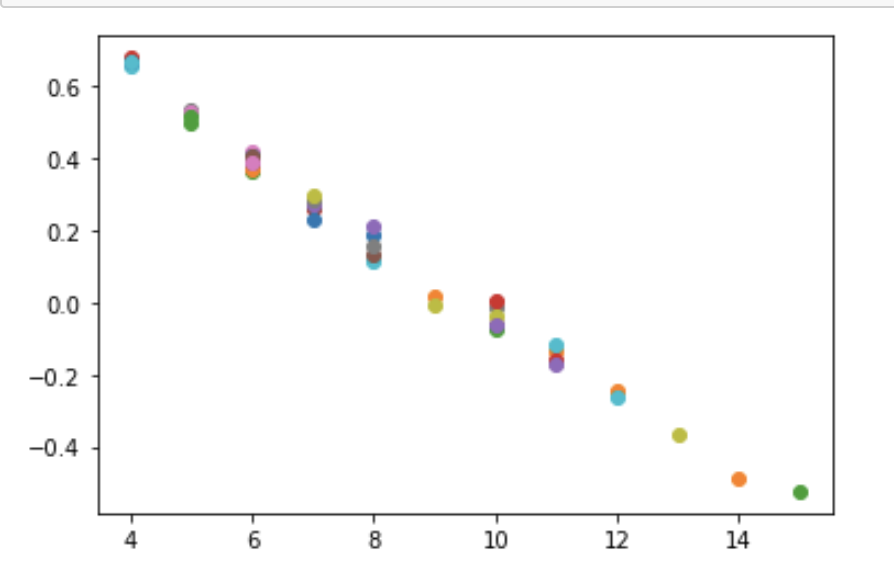}
\caption{This image depicts number of neighbors (x-axis) vs weighting vector value (y-axis) for each node in the graph. The graph here is an Erdos-Renyi graph \cite{erdos59a} with 50 nodes, and 15\% connected. The distance metric is resistance distance \cite{resistance-dist}, and we have $t=6$, $\epsilon = 0.182$. It's generated using the NetworkX \cite{SciPyProceedings_11} package.}
\label{fig:graph_nn_weight_corr}
\end{figure}

\begin{figure}
\centering
\includegraphics[width = 0.6\textwidth, bb=0 0 450 250]{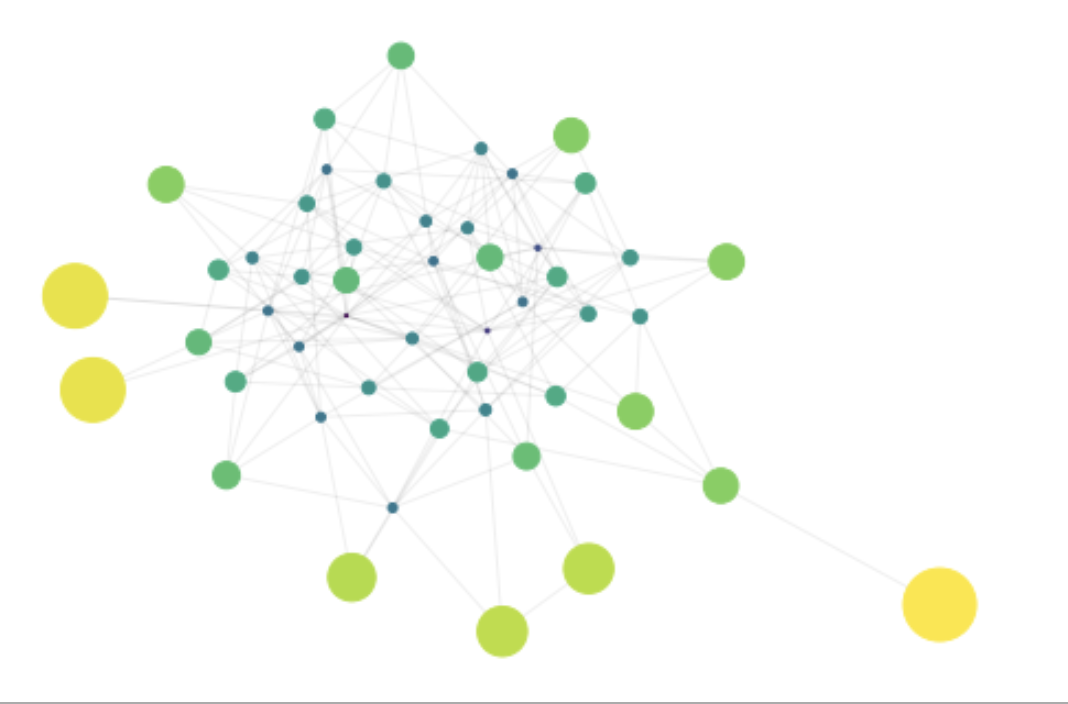}
\caption{A visualization of the same Erdos-Renyi graph \cite{erdos59a}, where the size of node $x$ is  defined by $e^{(4.1)(w(x) - \min(w))}$. The inverse relationship between magnitude and number of neighbors can clearly be seen.}

\end{figure}

\end{document}